\def\paperTitle{Stationary Representations: Optimally Approximating Compatibility and Implications for Improved Model Replacements
}
\def\authorBlock{
    Niccolò Biondi 
    \quad
    Federico Pernici  \;
    \quad
    Simone Ricci 
    \quad
    Alberto Del Bimbo \\
    DINFO (Department of Information Engineering), University of Florence, Italy, \\
    MICC (Media Integration and Communication Center), \\
    {\tt name}.{\tt surname}@unifi.it
}
\newif\ifreview 
\newif\ifarxiv \newcommand{\arxiv}{\arxivtrue}
\newif\ifcamera 
\newif\ifrebuttal 
\ifreview \usepackage[review]{cvpr} \fi
\ifarxiv \usepackage[pagenumbers]{cvpr} \fi
\ifrebuttal \usepackage[rebuttal]{cvpr} \fi
\ifcamera \usepackage{cvpr} \fi
\ifcamera \usepackage[accsupp]{axessibility} \fi
\newcommand{\loss}{HOC\xspace}
\newcommand{\R}[1]{{%
    \textbf{%
        \ifstrequal{#1}{1}{\textcolor{red}{R#1}}{%
        \ifstrequal{#1}{2}{\textcolor{blue}{R#1}}{%
        \ifstrequal{#1}{3}{\textcolor{magenta}{R#1}}{%
        \ifstrequal{#1}{4}{\textcolor{teal}{R#1}}{%
                           \textcolor{cyan}{R#1}%
        }}}}%
    }%
}}
\newtheorem{thm}{Theorem}
\newtheorem{lem}[thm]{Lemma}
\newtheorem{defn}{Definition}
\providecommand{\customgenericname}{}
\newcommand{\newcustomtheorem}[2]{%
  \newenvironment{#1}[1]
  {%
   \renewcommand\customgenericname{#2}%
   \renewcommand\theinnercustomgeneric{##1}%
   \innercustomgeneric
  }
  {\endinnercustomgeneric}
}
\newcommand*{\addFileDependency}[1]{
  \typeout{(#1)}
  \@addtofilelist{#1}
  \IfFileExists{#1}{}{\typeout{No file #1.}}
}
\definecolor{cvprblue}{rgb}{0.21,0.49,0.74}
\crefname{section}{Sec.}{Secs.}
\crefname{table}{Table}{Tables}
\crefname{figure}{Fig.}{Figs.}
\newenvironment{proof}{{\noindent\it Proof. \;}}{\nobreak\hfill$\square$}
\newcommand{\dist}{{\rm d}\xspace}
\begin{document}
\title{\paperTitle}
\author{\authorBlock}
\maketitle

\begin{abstract}
Learning compatible representations enables the interchangeable use of semantic features as models are updated over time. This is particularly relevant in search and retrieval systems where it is crucial to avoid reprocessing of the gallery images with the updated model. While recent research has shown promising empirical evidence, there is still a lack of comprehensive theoretical understanding about learning compatible representations. 
In this paper, we demonstrate that the stationary representations learned by the $d$-Simplex fixed classifier optimally approximate compatibility representation according to the two inequality constraints of its formal definition. This not only establishes a solid foundation for future works in this line of research but also presents implications that can be exploited in practical learning scenarios. An exemplary application is the now-standard practice of downloading and fine-tuning new pre-trained models. 
Specifically, we show the strengths and critical issues of stationary representations in the case in which a model undergoing sequential fine-tuning is asynchronously replaced by downloading a better-performing model pre-trained elsewhere. Such a representation enables seamless delivery of retrieval service (i.e., no reprocessing of gallery images) and offers improved performance without operational disruptions during model replacement. 
Code available at: \url{https://github.com/miccunifi/iamcl2r}.
\end{abstract}

\section{Introduction}
\label{sec:intro}
By learning powerful internal feature representations from data, Deep Neural Networks (DNNs) \cite{chopra2005learning,bengio2013representation,sharif2014cnn,YosinskiNIPS2014} have made tremendous progress in some of the most challenging search tasks such as face recognition~\cite{taigman2014deepface, sun2015deepid3, Ranjan2017, DBLP:conf/cvpr/DengGXZ19, meng2021magface}, person re-identification~\cite{Sun2018Beyond, Hermans2017In, ristani2018features}, image retrieval~\cite{radenovic2018revisiting,radenovic2018fine,chen2022deep} and this significance also extends to a variety of other data modalities \cite{brown2020language, 
 radford2021learning}. 
Although all of the works mentioned above have focused on learning feature representations from \textit{static} and, more recently, \textit{dynamic} datasets~\cite{caccia2021new, Davari_2022_CVPR, barletti2022contrastive, 
asadi2023prototype}, the now-standard practice is downloading and fine-tuning representations from models pre-trained elsewhere \cite{wolf2019huggingface,touvron2023llama}. These ``third-party'' pre-trained models often incorporate new data, utilize alternative architectures, adopt different loss functions or more in general provide novel methodologies. Whether applied individually or combined, these advancements aim to encapsulate the field's rapid progress within a single unified model \cite{raffel2023building}. This greatly facilitates the exploitation of internally learned semantic representations, particularly as models, datasets, and computational infrastructure continue to expand in size, complexity, and cost \cite{bommasani2021opportunities,sorscher2022beyond}.

\begin{figure}[t]
    \centering    \includegraphics[width=0.9\linewidth]{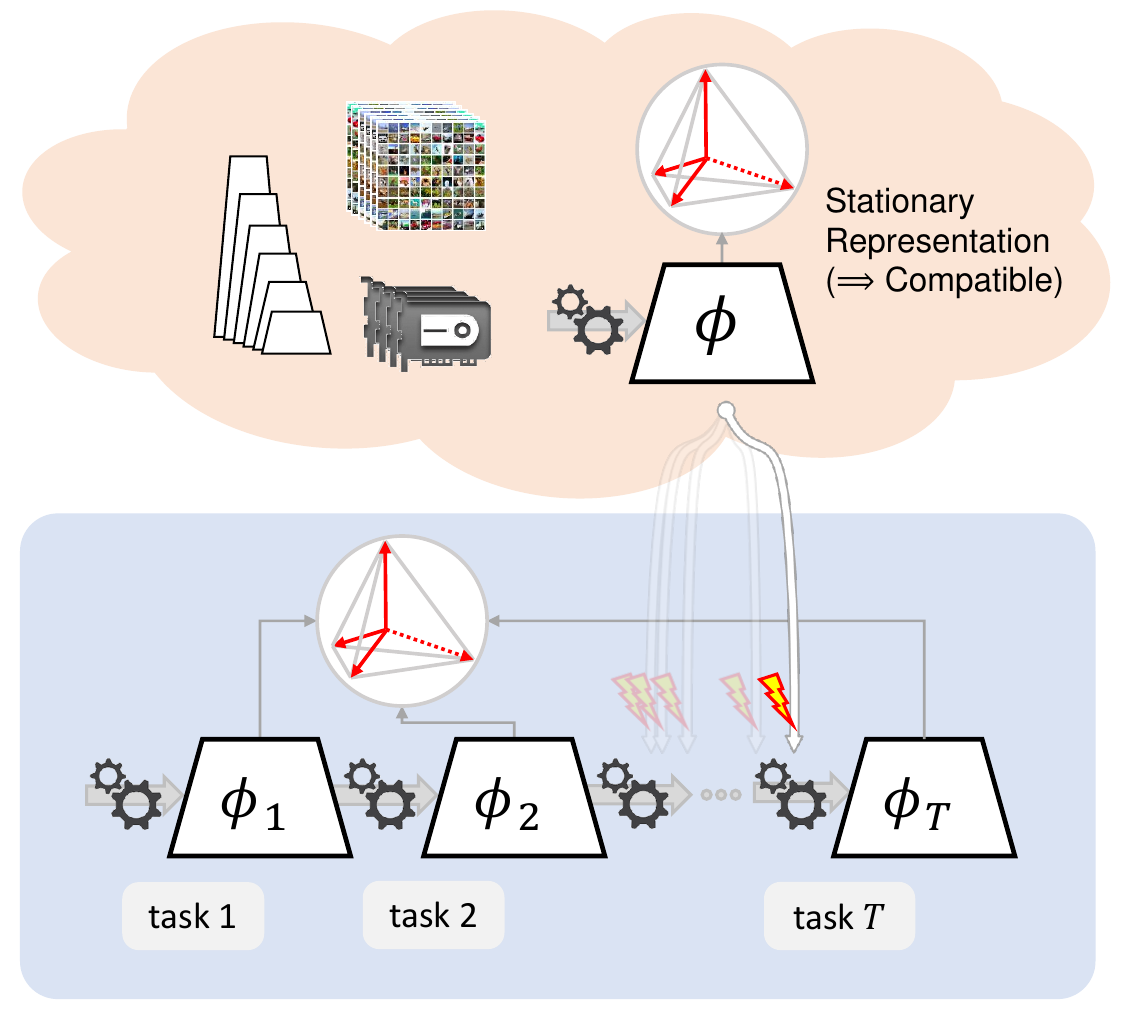}
    \caption{
    Improved Asynchronous Model Compatible Lifelong Learning Representation (IAM-CL$^2$R pronounced ``\textit{I am clear}''). 
    In the process of lifelong learning, a model is sequentially fine-tuned and asynchronously replaced with improved third-party models that are pre-trained externally. Stationary representations ensure seamless retrieval services and better performance, without the need to reprocess gallery images.
    }
    \label{fig:intro}
\end{figure}

The challenge of fully exploiting such standard practice in retrieval/search systems has to deal with the underlying problem of \textit{compatible learning} \cite{shen2020towards,meng2021learning,biondi2023cores}. That is the desire to align the representation of different models trained with different data, initialization seeds, loss functions, or alternative architectures---either individually or in combination.
In such applications, maintaining alignment is crucial to minimize the need for repeated reprocessing of gallery images for feature extraction each time a new pre-trained model becomes available \cite{raffel2023building}.
Reprocessing is not only computationally intensive but may also be unsustainable for extensive gallery sets~\cite{bommasani2021opportunities, sorscher2022beyond, strubell2019energy}
or unfeasible if the original images are no longer accessible due to privacy concerns~\cite{price2019privacy}.
This holds across various typical galleries: social networks update millions of images every month, while in robotics and automotive domains, the update rate can be as rapid as hundreds of images every second. Similarly, in textual domains, books can be structured into chapters, paragraphs, and sentences, enabling the capture of semantic relationships between these segments. 
While a similar organizational principle can be structured for the web with LLMs \cite{lewis2020retrieval,radford2021learning}, the challenge lies in the impracticality of reprocessing such extensive content with each advancement in representation models.
Although recent research has shown the effectiveness of compatible representation learning~\cite{shen2020towards, wang2020unified, biondi2023cores, meng2021learning, zhang2022towards, duggal2021compatibility, pan2023boundary, iscen2020memory, Wan_2022_CVPR, biondi2023cl2r, trauble2021backward,zhou2023bt}, there is still a lack of comprehensive theoretical understanding about compatibility. 

This paper introduces a theorem that demonstrates how the stationary representations proposed in \cite{pernici2021regular, Pernici_2019_CVPR_Workshops} optimally approximate compatibility according to the two inequality constraints of its formal definition as provided in \cite{shen2020towards}.
This not only establishes a solid foundation for future works, but also presents implications that can be exploited fine-tuning third-party models without the need of reprocessing gallery images. Specifically, we show that a continuously fine-tuned model can be asynchronously replaced by downloading a higher-performing, pre-trained model from an external source. Due to stationarity (and therefore optimal compatibility), such a replacement provides seamless retrieval services with improved performance, eliminating the need for image gallery reprocessing. We refer to this scenario as Improved Asynchronous Model Compatible Lifelong Learning Representation (IAM-CL$^2$R pronounced ``\textit{I am clear}''). Fig.~\ref{fig:intro} illustrates the relationship between sequential fine-tuning and model replacement.  
Furthermore, as will be elaborated in the related work section, our foundation draws connections with the Neural Collapse phenomenon \cite{papyan2020prevalence} and its associated theory.

Our second contribution is related to a specific challenge that arises: the tendency of the old and the new replaced models to align at their first-order statistics, an inherent property of stationary representation. Consequently, cross-entropy based prediction errors alone, when fine-tuning the representation, may not fully capture higher-order dependencies. To address this issue while preserving compatibility, we show that learning stationary representations using a convex combination of the cross-entropy loss and the infoNCE~loss~\cite{oord2018representation} is equivalent to training under one of the compatibility inequality constraints in~\cite{shen2020towards}. 
This combined loss, termed Higher-Order Compatibility (HOC), distinguishes itself from the use of cross-entropy alone by capturing higher-order dependencies and optimally approximating compatibility.

\section{Related Work}
\label{sec:related}
\textbf{Neural Collapse.} 
Neural Collapse (NC) is an empirical phenomenon that demonstrates the alignment between features and the classifier in a symmetric configuration \cite{papyan2020prevalence}.
Specifically, each class feature vector and its corresponding class prototype vector align with each other (i.e., collapse onto the same vector), forming a regular Simplex geometry in a subspace of the representation space. This particular configuration, which results in maximal separation of the collapsed vectors, is also referred to as a regular Simplex ETF (Equiangular Tight Frame). 
As training progresses and the training phase goes beyond zero classification error, the network increasingly approaches collapse. 
Notably, this also agrees with the double descent generalization regime observed within the same training phase \cite{belkin2019reconciling}. The two phenomena together indicate a form of stable steady-state for the internal representations of Deep Neural Networks.

Prior to the observation of neural collapse, other research applied the steady-state of the Simplex geometry directly from the beginning of training.
The fixed classifier with mutually orthogonal prototypes, introduced in \cite{hoffer2018fixed}, firstly demonstrates no degradation in classification performance. Building on this initial model, the regular polytope fixed classifiers---such as the $d$-Simplex, $d$-Cube, and $d$-Orthoplex---advance the concept further by observing stationary and maximally separated representations, as introduced in \cite{pernici2019fix} and further detailed in \cite{pernici2021regular}. Prior to these developments, \cite{liu2018learning} delved into the early energy-based investigations of symmetric and maximal separation in the representation space.
The distinction between the natural emergence of a regular Simplex ETF and intentionally fixing the regular Simplex geometry at the beginning of training is that prior fixing can preserve regions in the representation space for future classes, as introduced in \cite{pernici2021class} and more recently in \cite{yang2022neural} and \cite{zhou2022forward}. Our work takes advantage of this preservation for future classes, allowing third-party representation models to be trained from scratch and fine-tuned, while mitigating the interference in the representation space of the classes involved in both processes.

As neural collapse is related to the interaction between the neural network's final and penultimate layers, it offers a tool to examine training dynamics and convergence, as introduced in \cite{mixon2022neural} and \cite{fang2021exploring} under the name of Unconstrained Feature Model (UFM) and Layered Peeled Model (LPM), respectively.
In both \cite{graf2021dissecting} and \cite{zhu2021geometric}, the favorable convergence of fixing the final classifier according to the UFM is demonstrated. 
In \cite{fang2021exploring}, it is shown that training on imbalanced datasets does not necessarily result in NC. Additional observations from \cite{yang2022we} suggest that NC can emerge in both imbalanced and long-tail scenarios when the classifier is fixed to a $d$-Simplex geometry. 
Further detailed results on NC are presented in \cite{kothapalli2022neural}. Our proof is based on the assumption from the UFM and LPM that the backbone has sufficient expressiveness to allow for the independent study of each feature. Our proof is also based on the assumption of $d$-Simplex fixed classifier, whose inherent symmetry allows to reduce the extent of the analysis to a single pairwise class interaction, as it causes all the interactions to be identical.

\noindent
\textbf{Compatible Representations Learning.} 
Compatible representations broadly refer to the ability to align different learned representations, as discussed in \cite{lenc2015understanding,li2015convergent, wang2018towards,kornblith2019similarity,bansal2021revisiting}.
The distinction outlined in \cite{shen2020towards} is that the alignment of models should be achieved without wasting the information learned from new data. This capability is typically evaluated in a query and gallery setting, where query and gallery features are extracted from two different representation models. The model for the query is trained using an extended dataset that includes additional data not present in the one used for training the gallery's model.
The study in \cite{shen2020towards} further presents a method called Backward Compatible Training (BCT), which applies regularization to a new model using the classifier from the previous learning phase. This approach implicitly aligns the current improved model with the previously trained classifier, which is kept fixed. Several other methods have adopted this basic working principle: 
The fundamental aspect of this principle is that the challenge of model alignment is primarily demanded by the new model, which must learn from both the additional and the old data how to compensate for the inadequate representations of the previously learned models.
Conversely, as also recently highlighted in \cite{zhou2023bt}, methods such as \cite{Chen_2019_CVPR} or the more recent \cite{hu2022learning,ramanujan2022forward,meng2021learning} train a lightweight transformation to convert old representations into new ones for backward compatibility. However, these methods do not entirely eliminate the re-processing cost. As the number of chained mappings increases, the entire chain necessitates re-evaluation each time the representation model is updated. This makes them unsuitable for sequential learning and large gallery-sets. While its primary focus is on classification, the study in \cite{iscen2020memory} is one of the first methods employing sequential chaining transformations for aligning representations within a common reference space.
The works in \cite{biondi2023cl2r} and \cite{Wan_2022_CVPR} bypass the use of chaining transformations, focusing instead on aligning representations for compatibility purposes in lifelong learning scenarios. Both approaches leverage auxiliary losses to ensure similarity among previously learned representations. 
Additionally, \cite{biondi2023cl2r} achieves alignment with an absolute reference through the use of fixed classifiers, in line with the neural collapse phenomenon.

The work in \cite{zhou2023bt} argues that there is an inherent trade-off in the definition of compatibility introduced in \cite{shen2020towards}, which inspires them to ``hold'' incompatible information of the new model on additional orthogonal dimensions to avoid this conflict. Their argument seems to be in line with the recent work \cite{biondi2023cores} and \cite{biondi2023cl2r}
based on stationarity in which (nearly) orthogonal dimensions are pre-allocated from the beginning using a regular $d$-Simplex fixed classifier.
In this paper, we establish a formal relationship among compatibility, neural collapse, and stationarity, showing that stationarity provides an optimal approximation to the compatibility definition formulated in \cite{shen2020towards}.

\section{Theoretical Results}
\label{sec:method}

\subsection{Stationarity and Compatibility}

\begin{figure*}[t]
    \centering    
    \hspace{-25pt}
    \begin{subfigure}{0.57\linewidth}
    \centering
    \adjincludegraphics[height=4.2cm,trim={0 0 {.65\width} 0},clip]   {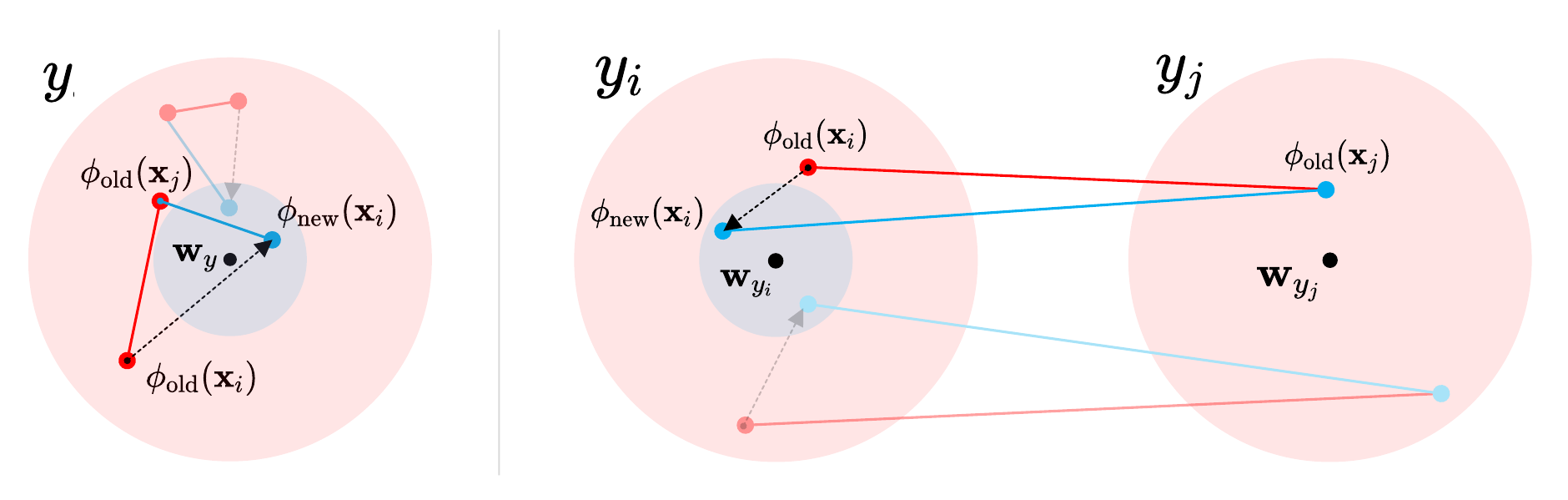}
    \caption{Same class}\label{fig:feat_space_change_pre}
    \end{subfigure}
    \hspace{-98pt}
    \begin{subfigure}{0.57\linewidth}
    \centering        
    \adjincludegraphics[height=4.2cm,trim={{.35\width} 0 0 0},clip]        {figs/proof-fig.pdf}
    \caption{Different class}
    \label{fig:feat_space_change_post}
    \end{subfigure}
    \caption{ 
    Key concepts and relationships underlying Theorem~\ref{theo:compatibility}. Distances in feature space of two distinct samples within their hyperballs before and after model update, with the update process represented by a dotted arrow.
    (\textit{a}): Distances between samples $\mathbf{x}_i$ and $\mathbf{x}_j$ of the same class $y$ before (red) and after (cyan) model update.
    (\textit{b}): Distances between samples $\mathbf{x}_i$ of class $y_i$ and $\mathbf{x}_j$ of class $y_j$, before (red) and after (cyan) model update. Compatibility is verified by computing the expected lengths of the segments and verifying if they satisfy the inequalities of the compatibility definition. 
    A transparently colored instance shows counter-intuitive distance behavior. Expectation reveals the underlying pattern of approximation. 
    \vspace{-0.3cm}
    }
    \label{fig:theo-diff-classes-dist}
\end{figure*}
\paragraph{Preliminaries.}
Let $\mathcal{G}=\{ \mathbf{x}_i \}_{i=1}^{N_g}$ be a gallery-set composed of a set of $N_g$ images $\mathbf{x}_i \in \mathbb{R}^D$ with class labels from $\mathcal{Y} = \{ y_i \}_{i=1}^{L}$
and $\Phi^\mathcal{G} = \{ \phi(\mathbf{x}_i) \in \mathbb{R}^d \, | \, \forall \mathbf{x}_i$ $\in \mathcal{G}\}$ be the set of feature vectors of the gallery-set $\mathcal{G}$ obtained with representation model $\phi$. Let $\mathcal{Q}=\{ \mathbf{x}_i \}_{i=1}^{N_q}$ be a query-set  composed of $N_q$ images $\mathbf{x}_i \in \mathbb{R}^D$ and $\Phi^\mathcal{Q} = \{ \phi(\mathbf{x}_i) \in \mathbb{R}^d \, | \, \forall \mathbf{x}_i$ $\in \mathcal{Q}\}$ be the set of feature vectors of the query-set $\mathcal{Q}$ obtained with $\phi$. 
Visual search is performed using a distance function 
$\dist(\cdot, \cdot)$ to identify the closest gallery features to the query features.

Let $ \mathcal{T}_1, \mathcal{T}_2, \dots, \mathcal{T}_T $ be a sequence of $T$ tasks, where each task $\mathcal{T}$ is composed of labeled images $\mathbf{x}_i$ of class $y_i \in \mathcal{K}$ with $\mathcal{K}$ the set of classes in $\mathcal{T}$.
At task $t$, the model $\phi_t$ is fine-tuned starting from the previous representation model $\phi_{t-1}$. 
Compatibility between the current model $\phi_{t}$ and a previous model $\phi_{k}$, with $k<t$, is achieved when the feature vector of any query image obtained with $\phi_{t}$, the set $\Phi_t^\mathcal{Q}$, can be compared with feature vectors in $\Phi_k^\mathcal{G}$ without reprocessing the gallery-set. 
The following provides a formal definition of compatibility \cite{shen2020towards}:
\begin{defn}[Compatibility] \label{def:compatibility-shen}
Given two representation models $\phi_t$ and $\phi_k$, with $\phi_t$ learned after $\phi_k$, $\phi_t$ and $\phi_k$ are compatible according to the distance function $d(\cdot, \cdot)$ if it holds:
\begin{subequations}\label{eq:compatible_set_dist}
\begin{align}
 \dist \big(\phi_{k}(\mathbf{x}_i), \phi_{t}(\mathbf{x}_j) \big) &\leq \dist \big(\phi_{k}(\mathbf{x}_i), \phi_{k}(\mathbf{x}_j) \big)  \label{eq:first} \\
  \forall \, &(i,j) \in \big\{(i,j) \, | \, y_i = y_j \big\}  \notag \\   &\text{and}  \notag \\
  \dist \big(\phi_{k}(\mathbf{x}_i), \phi_{t}(\mathbf{x}_j) \big) &\geq \dist \big(\phi_{k}(\mathbf{x}_i), \phi_{k}(\mathbf{x}_j) \big) \label{eq:second}   \\
  \forall \, &(i,j) \in \big\{(i,j) \, | \, y_i \neq y_j \big\}  \notag \\ 
 \text{with  } k < t, \,\,\ t = ( 2,3, \dots,& T ) , \,\, k = ( 1,2, \dots, T-1 ).  \notag
\end{align}
\end{subequations}
\end{defn}

\paragraph{Main Result.}
In this paragraph, we state and prove that learning stationary feature representations according to a $d$-Simplex fixed classifier necessarily implies optimal approximation of the compatibility as defined in Eqs.~\ref{eq:first} and~\ref{eq:second}. 
The formulation involves examining the expected distance between feature points before and after a learning update in a high-dimensional space, where the feature points are assumed to be distributed in hyperballs (i.e., high dimensional ball) centered at the prototypes of the $d$-Simplex fixed classifier. 
This abstraction allows for mathematical manipulation and analysis of the cluster as a single entity rather than individual points. 

\begin{customthm}{1}[Stationarity $\implies$ Compatibility] 
\label{theo:compatibility}
Let $\mathbf{W}=[ \mathbf{w}_1, \mathbf{w}_2, \ldots, \mathbf{w}_K ]$ be the $d \times K$ matrix of a $d$-Simplex fixed classifier with $K$ pre-allocated classes.
Given two tasks, \( \mathcal{T}_k \) and \( \mathcal{T}_t \). The task \( \mathcal{T}_t \) is derived from \( \mathcal{T}_k \) by incorporating an additional training set \( \Delta\mathcal{T} \), such that \( \mathcal{T}_t = \mathcal{T}_k \cup \Delta\mathcal{T} \). The combined task, \( \mathcal{T}_t \), comprises a set of classes each denoted by $y$, where \( {y} \in \{ 1,2,\dots,K_t \} \) and \( K_t < K \).
Under the assumption that learning the new task \( \mathcal{T}_t \) causes the hyperball \( \mathcal{B}_k(\mathbf{w}_y) \) with radius \( r_k^y \) to shrink into a smaller hyperball \( \mathcal{B}_t(\mathbf{w}_y) \), i.e., \( r_{t}^y \leq r_k^y \) for all \( y \) in the set \( \{ 1,2, \dots, K_k \} \), then it necessarily follows that \( \phi_{t} \) and \( \phi_{k} \) optimally approximate the compatibility inequality constraints as defined in Def.~\ref{def:compatibility-shen} in expectation.
\end{customthm}
The proof is available in the Appendix.

\paragraph{Discussion.}
The Theorem relies on two main assumptions: the use of a $d$-Simplex fixed classifier~\cite{pernici2021regular} and the model's sufficient expressiveness, as described in the UFM abstraction~\cite{mixon2022neural, fang2021exploring}. 
The latter assumption enables us to consider features independently\footnote{Essentially, the Neural Collapse phenomenon, which is observed across various networks and datasets, \textit{also appears} in a two-layer neural network when assuming input feature independence (i.e., a UFM). This equivalence supports the assumption that: 1) real network backbones are typically expressive enough to learn features as independent entities, and 2) UFM can be used as a tool to study neural networks properties.}.
While the former allows focusing on a single pairwise class interaction, since interactions with all other classes are symmetrically similar and cannot change. Fig.~\ref{fig:theo-diff-classes-dist} illustrates the key concepts and relationships presented in Theorem~\ref{theo:compatibility}. 

Without loss of generality, the Theorem considers two distinct hyperballs of different radius $\mathcal{B}_{\rm new}(\mathbf{w}_y)$ and $\mathcal{B}_{\rm old}(\mathbf{w}_y)$ representing the semantic clusters of a generic class $y$, respectively before and after a generic learning update.
The assumption that features are distributed in hyperballs stems from the margin-based softmax loss\footnote{The margin enforces the confinement of features within a hyperball or a hyperdisc (the local approximation of a hypercap) around class prototypes. A disc in high-dimensional space can be considered a hyperball when referring to its filled volume.} introduced in \cite{Liu2016large_rebuttal}. This interpretation has since been utilized in various studies, such as SphereFace~\cite{Liu2017sphere_rebuttal} and ArcFace~\cite{DBLP:conf/cvpr/DengGXZ19}. Besides the margin formulation, empirical evidence, such as Neural Collapse \cite{papyan2020prevalence}, shows that class features not only cluster around their associated prototypes but also, with sufficient training epochs, collapse into them, resulting in hyperballs tightening around the prototypes.
Due to the stationarity property induced by the $d$-Simplex classifier  \mbox{$\mathcal{B}_{\rm new}(\mathbf{w}_y)$} and $\mathcal{B}_{\rm old}(\mathbf{w}_y)$ hyperballs have the same center in the representation space on the classifier prototype $\mathbf{w}_y$. 
After the learning step, $\mathcal{B}_{\rm new}(\mathbf{w}_y)$ has a shorter radius (i.e., adding new information improves the discrimination capability of the model \cite{hestness2017deep, prato2021scaling, nakkiran2021deep, caballero2023broken}).

In particular, Fig.~\ref{fig:feat_space_change_pre} shows the case in which feature vectors are from samples of the same class. 
As defined in Eq.~\ref{eq:first} compatibility requires that, after updating, the distance between $\phi_{\rm new}(\mathbf{x}_i)$ (in the cyan hyperball) and $\phi_{\rm old}(\mathbf{x}_j)$ (in red hyperball) is less than or equal to distance between $\phi_{\rm old}(\mathbf{x}_i)$ and $\phi_{\rm old}(\mathbf{x}_j)$.
The figure displays two configurations: one where the condition is met and another where it is not met (shown in transparent colors). 

Fig.~\ref{fig:feat_space_change_post} shows the case in which the feature vectors are from samples of different classes. 
As defined in Eq.~\ref{eq:second} compatibility requires that, after updating, the distance between $\phi_{\rm new}(\mathbf{x}_i)$ of class $y_i$ (in the cyan hyperball centered in $\mathbf{w}_{y_i}$) and $\phi_{\rm old} (\mathbf{x}_j)$ of class $y_j$ (in the red hyperball centered in $\mathbf{w}_{y_j}$) is greater than or equal to than the distance between $\phi_{\rm old}(\mathbf{x}_i)$ and $\phi_{\rm old}(\mathbf{x}_j)$. 
The Theorem establishes that, on average, this condition cannot be optimally satisfied and that stationarity is the best approximation achievable under the given constraints. A detailed justification for this is provided in the proof of Theorem~\ref{theo:compatibility}, with a clearer and more focused exposition presented as a  Corollary~\ref{corollary}.

Informally, the proof of the Theorem starts with the premise that, upon retraining a model, the probability of finding a class feature near the corresponding class prototype from the old model---an indicator of compatibility between the two models---is nearly zero. Subsequently, the proof establishes that the optimal approximation for a compatible representation is obtained when the average distance between the same hyperball in two distinct learned models is minimized. This minimization occurs when the two corresponding hyperballs are centered at the same class prototype and when adding more classes does not alter this distance, i.e., the stationarity condition.

Our formulation calculates the average distance between hyperballs based on the Ball Line Picking problem, which determines the expected length of a line segment that connects two random points inside a hyperball~\cite{burgstaller2009average,solomon1978geometric,kendallgeometrical,sors2004integral,kirby1982average,vaughan1984approximate}. Differently from that problem, our theorem considers a line segment connecting two random points in two distinct hyperballs, each with a different radius. Specifically, we analyze the cases as shown in Fig.~\ref{fig:theo-diff-classes-dist}. These hyperballs represent the ``class-state'' before and after the learning step during each model update. Closed-form solutions are not available for this problem, except in a specific two-dimensional case \cite{fairthorne1964distances}.

\subsection{Stationarity and Higher-Order Alignment}
A specific challenge arises when fine-tuning stationary learned representation models, for example in the \mbox{IAM-CL$^2$R} setting of Fig.~\ref{fig:intro}.
In this case the old and the new models align at the first-order statistics, an inherent property of stationarity \cite{pernici2021regular}. 
The consequence is that cross-entropy based prediction errors may not fully capture higher-order dependencies in representation space. We conjecture that simple cross-entropy mostly focuses on prediction errors related to the forgetting of the internal representation which may not promote compatibility when the representation model is largely aligned.
To address this problem, we show that adding the infoNCE loss function~\cite{oord2018representation,tian2019contrastive} is equivalent to training with the cross-entropy loss under one of the compatibility constraints while capturing higher-order dependencies.

The loss for training at task $t$ the stationary representation model $\phi_t$ assumes the form \cite{pernici2021regular}:
\makeatletter
\newcommand{\vst}{\bBigg@{2}}
\newcommand{\vast}{\bBigg@{3.5}}
\newcommand{\Vast}{\bBigg@{5}}
\makeatother
\begin{equation} \label{eq:loss_ce_simplex} 
\resizebox{0.99\hsize}{!}{$%
\begin{aligned}
    &\mathcal{L}_{\textsc{sce}} (\phi_t)= \\ 
    &= - \sum\limits_{B} \log   \vast( 
    \dfrac {\text{exp} \Big( \mathbf{W}_{y_i}^{\top}{{\phi_t(\mathbf{x}_i)}} \Big) } {\sum\limits_{\scriptscriptstyle j =1}^{K_t} \text{exp} \Big( \mathbf{W}_{j}^{\top}{{\phi_t(\mathbf{x}_i)}} \Big)  + \sum\limits_{\scriptscriptstyle j=K_t+1}^{K} \text{exp} \Big( \mathbf{W}_{j}^{\top}{{\phi_t(\mathbf{x}_i)}} \Big) } \vast)
\end{aligned}
$}
\end{equation}
where $\mathbf{W}^{\top}_{j} \in \mathbb{R}^d$ denotes the $j$-th column of the $d$-Simplex classifier matrix $\mathbf{W} \in \mathbb{R}^{d \times K}$, being $K$ the number of pre-allocated classes, $K_t = |\bigcup_{i=1}^t \mathcal{K}_i|$ the number of classes learned until time $t$ with $K_t < K$,
and $B$ is a mini-batch of samples of $\mathcal{T}_{t}$. 
The first term in the denominator accounts for the classes learned until $t$. 
The second term accounts for future classes, preserving dedicated regions in the representation space. This ensures that adding new classes minimally impacts the representation of previously learned classes \cite{pernici2021class,biondi2023cores,yang2023neural,biondi2023cl2r}.

We train the representation model $\phi_t$ with the following convex combination, namely: 
\begin{align} \label{eq:total_loss}
     \mathcal{L}_\textsc{hoc} (\phi_t) =     \lambda \mathcal{L}_{\textsc{sce}}(\phi_t)  + (1 - \lambda) \, \mathcal{L}_\textsc{nce} (\phi_t, \phi_{t-1}), & \\  \text{with} \quad \lambda \in [0,1] \nonumber
\end{align}
where: $\mathcal{L}_\textsc{sce} (\phi_t)$ is the cross-entropy loss of Eq.~\ref{eq:loss_ce_simplex}, and 
\begin{align}
    \nonumber \\
    &\mathcal{L}_\textsc{nce} (\phi_t, \phi_{t-1}) = - \sum\limits_{B} 
        \log \!  \vast( 
        \dfrac {\Delta \big( \phi_{t-1}(\mathbf{x}_i), \phi_t(\mathbf{x}_i) \big)  } { \sum\limits_{j \neq i}  \Delta \big( \phi_{t-1}(\mathbf{x}_i), \phi_t(\mathbf{x}_j) \big)   } \vast)  \label{eq:CONTRAST}
\end{align}
with 
\begin{equation}
\label{eq:CONTRAST2}
    \Delta \big( \phi_{t-1}(\mathbf{x}_i), \phi_t(\mathbf{x}_j) \big) = \text{exp} \vst( \tau  \cdot \frac{\phi_{t-1}(\mathbf{x}_i) \phi_{t}(\mathbf{x}_j)}{ ||\phi_{t-1}(\mathbf{x}_i) || \, || \phi_{t}(\mathbf{x}_j))|| }  \vst)
\end{equation}
is the contrastive loss~\cite{oord2018representation,tian2019contrastive} based on $\tau$-scaled cosine similarity between $\phi_{t-1}(\mathbf{x}_i)$ and $\phi_{t}(\mathbf{x}_j)$.
We show that training the representation model with the $\mathcal{L}_\text{\loss}$~of Eq.~\ref{eq:total_loss}
is both: (1) able to capture higher-order dependencies between old and new model representations and (2) equivalent to learning under the compatibility constraints in Def.~\ref{eq:first}.
We refer to this loss as the Higher-Order Compatibility loss ($\mathcal{L}_\textsc{hoc}$).

Through Theorem~\ref{theo:compatibility} presented in the previous section, we establish that the constraint of Eq.~\ref{eq:first} cannot be exploited in combination with the constraint of Eq.~\ref{eq:second}.  Based on this result we show that, under no specific conditions, the constrained optimization problem using solely the inequality constraint of Eq.~\ref{eq:first}:
\begin{equation}\label{eq:constrained_opt_probl}
\begin{aligned}
\underset{\phi_t}{\text{argmin}} \quad
&  \mathcal{L}_\textsc{sce}(\phi_t) 
\\
     \text{s.t.} \quad & \dist \big(\phi_k(\mathbf{x}_i), \phi_t(\mathbf{x}_j) \big)   - \dist \big(\phi_k(\mathbf{x}_i),  \phi_k(\mathbf{x}_j) \big) \leq 0 \; \\ 
    \forall \, y_i & = y_j
 \\
\end{aligned}
\end{equation}
can be transformed into a tractable form. Rooted in the work of \cite{jiang2021churn}, this transformation not only provides an approach to solve the tractability issue but, within the context of compatibility, it also allows preserving the optimality as outlined in the proof of  Theorem~\ref{theo:compatibility}. 
As shown in~\cite{jiang2021churn}, the model for a constrained problem like Eq.~\ref{eq:constrained_opt_probl} can be equivalently learned with a convex combination of the cross-entropy loss and the Kullback-Leibler divergence function. 

On the other hand, as discussed in~\cite{tian2020contrastive}, the contrastive loss $\mathcal{L}_\textsc{nce} (\phi_t, \phi_{t-1})$ can be approximated as the Kullback-Leibler divergence between the product of the marginals of the joint distribution of $\phi_t$ and $\phi_{t-1}$. Moreover, $\mathcal{L}_\textsc{nce} (\phi_t, \phi_{t-1})$ also approximates the mutual information between \(\phi_t\) and \(\phi_{t-1}\), thereby enabling to capture higher-order dependencies between consecutive updates of the model.
As a consequence, training with the loss in Eq.~\ref{eq:total_loss} is equivalent to the optimal classifier for the constrained optimization problem stated in Eq.~\ref{eq:constrained_opt_probl} and at the same time, thanks to the term $\mathcal{L}_\textsc{nce} (\phi_t, \phi_{t-1})$, takes into account higher-order variations between $\phi_{t-1}(\mathbf{x}_i)$ and $\phi_{t}(\mathbf{x}_j)$.
In the following, we call training the representation model using  $d$-Simplex with  $\mathcal{L}_\textsc{hoc}$~as $d$-Simplex-HOC.

\begin{figure}[t]
    \centering
    \includegraphics[width=0.7\linewidth]{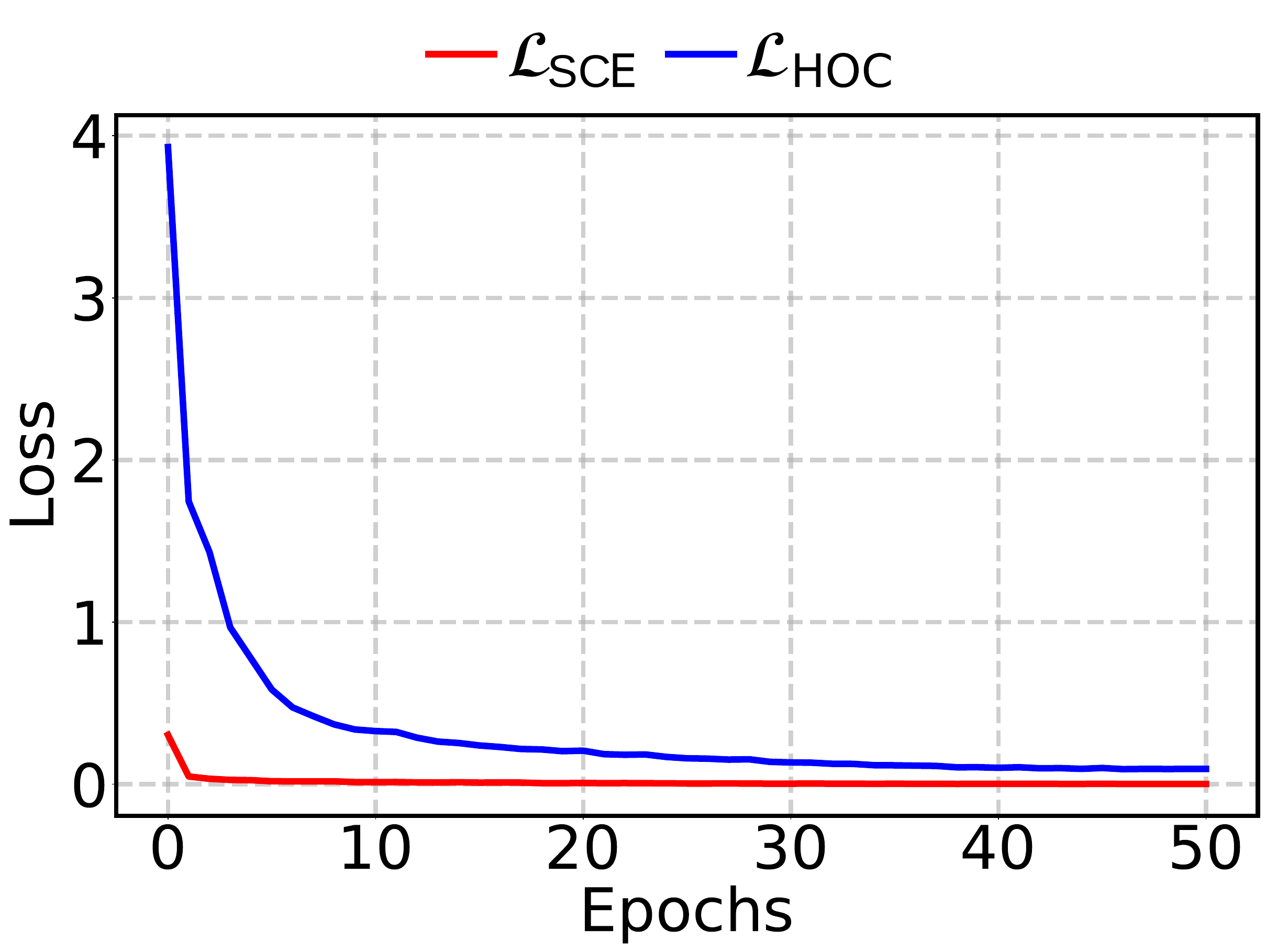}
    \caption{Training loss of a $d$-Simplex fixed classifier during a model update. Values are the cross-entropy loss of Eq.~\ref{eq:loss_ce_simplex} (red line) and the loss of Eq.~\ref{eq:total_loss} (blue line). Models are trained on MNIST. 
    }
    \label{fig:toyproblem}
\end{figure}

In Fig.~\ref{fig:toyproblem}, we illustrate the effects of $\mathcal{L}_\textsc{hoc}$ compared to the cross-entropy loss. We use a toy example with the LeNet++ CNN architecture \cite{wen2016discriminative} with the $d$-Simplex fixed classifier. The model is initially trained on the first five MNIST classes and then fine-tuned on all ten classes. The cross-entropy training error (red curve) converges rapidly to low values. In contrast, the convergence with the $\mathcal{L}_\textsc{hoc}$ loss (blue curve) is more gradual, which allows for the capture of richer information during back-propagation.

\section{Experimental Verification}
Referring to the IAM-CL$^2$R learning scenario presented in Fig.~\ref{fig:intro}, this section provides empirical evidence to verify the practical implications of the theoretical results discussed earlier.
\subsection{Datasets and Settings}
\paragraph{Pre-trained Models.}
We pre-train our models in a supervised manner using the ImageNet32~\cite{chrabaszcz2017downsampled}.
Three distinct models are pre-trained on ImageNet32 with $100$, $300$, and $600$ classes. The model trained with $100$ classes is used to initialize the model before fine-tuning on the sequence of tasks. 
The other two models are used to simulate the practice of downloading and fine-tuning pre-trained models and serve as third-party models that will replace the current one undergoing fine-tuning.
\begin{figure*}[t]
    \centering
    \begin{subfigure}{\linewidth}
        \centering        \includegraphics[width=0.82\linewidth]{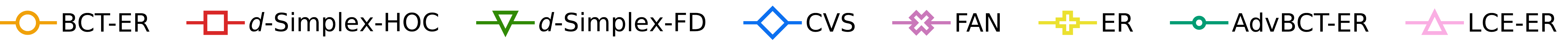}
    \end{subfigure}
    \begin{subfigure}{0.45\linewidth}
        \centering
        \includegraphics[width=0.99\linewidth]{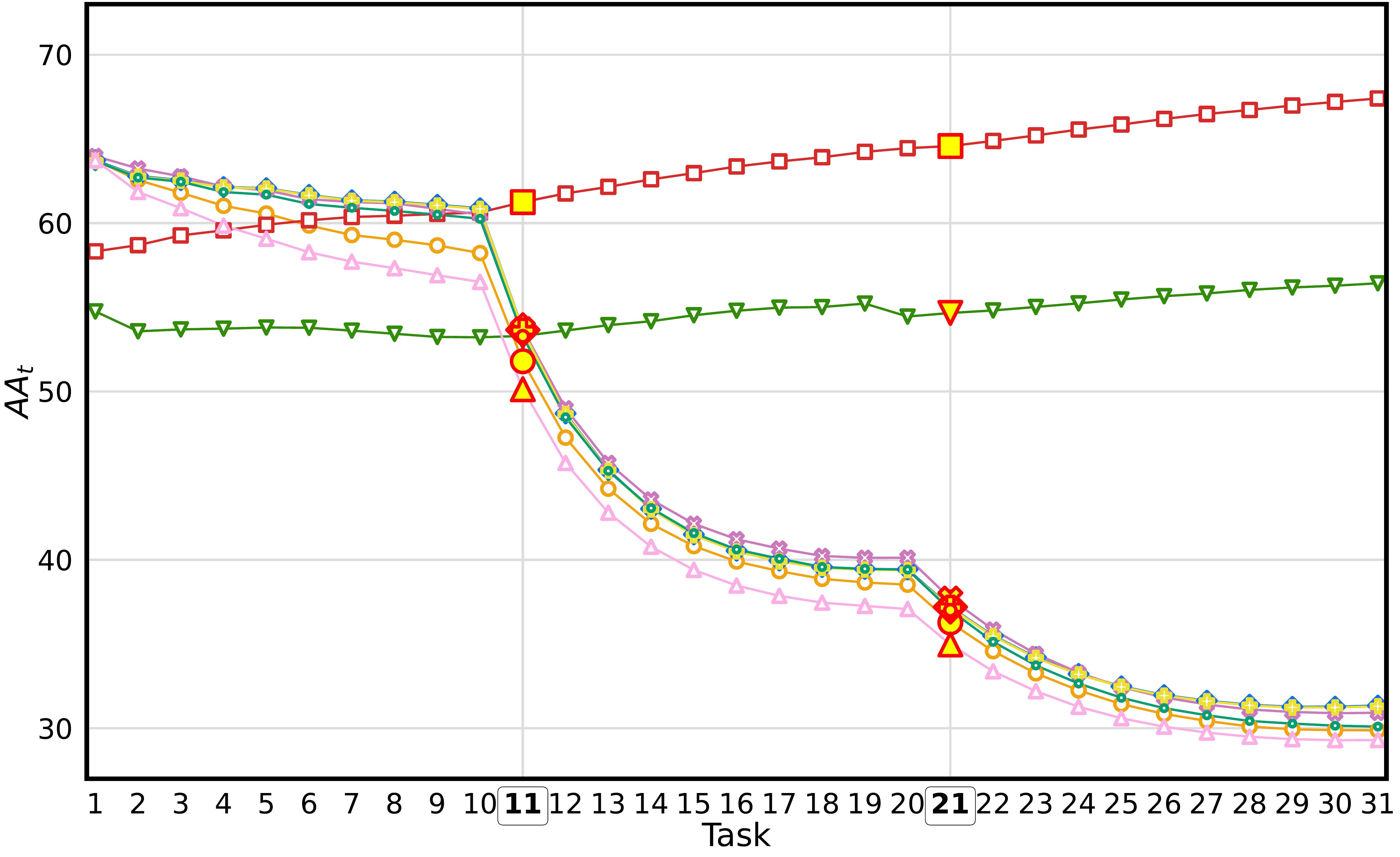} \caption{Two replacements}\label{fig:cifar30-iamcl2r}
    \end{subfigure}
    \hspace{25pt}
    \begin{subfigure}{0.45\linewidth}
        \centering        \includegraphics[width=0.99\linewidth]{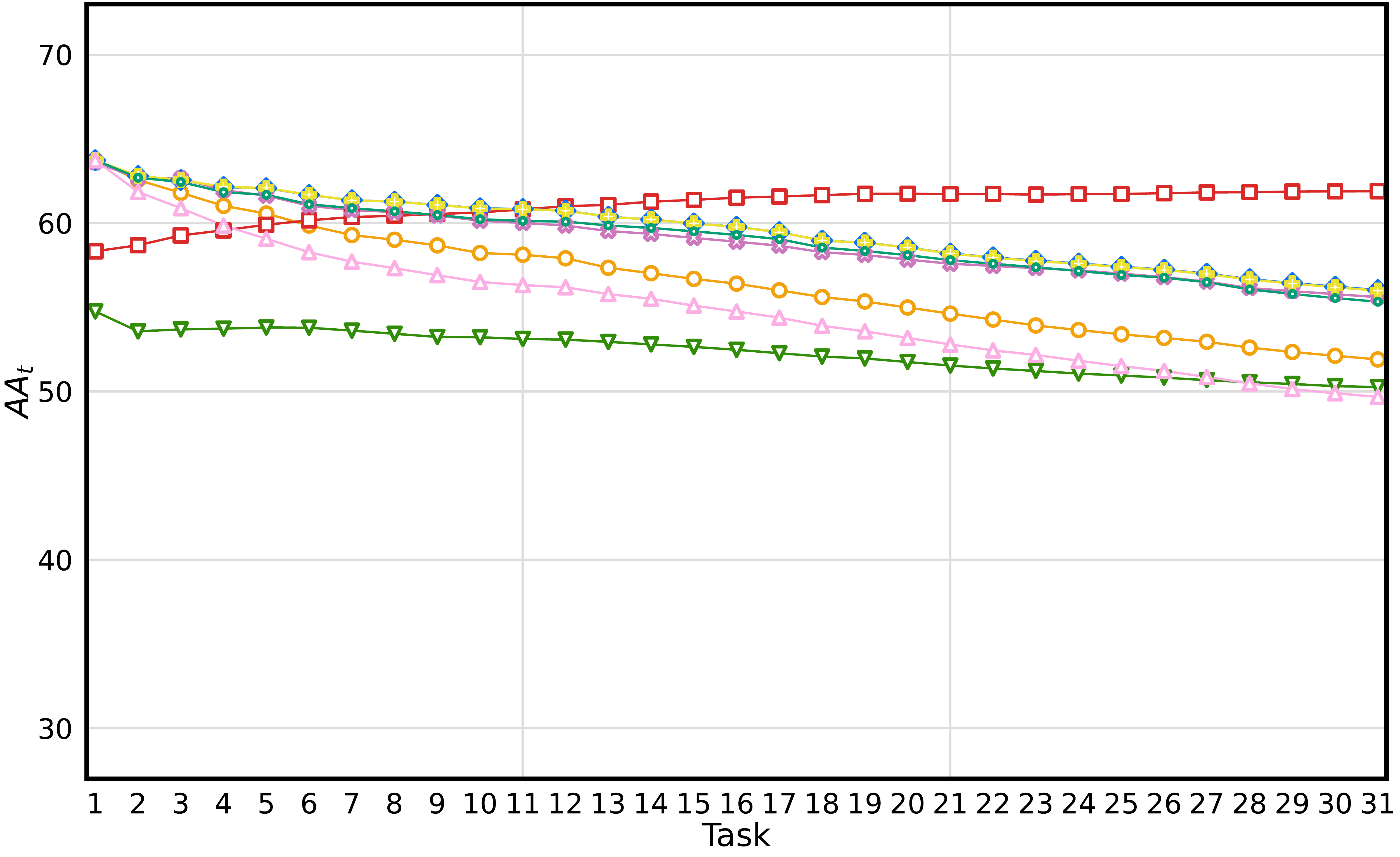} \caption{No replacements}\label{fig:cifar30-cl2r}
    \end{subfigure}
    \caption{Average multi-model Accuracy ($AA_{t}$) evaluated across 31 tasks using CIFAR100R/10, showing: \textit{(a)} model replacements at tasks 11 and 21 (indicated by yellow markers); \textit{(b)} no model replacement.
    }
    \label{fig:iamcl2r}
\end{figure*}
\begin{figure*}[t]
    \centering
    \hspace{-7pt}
    \begin{subfigure}{0.245\linewidth}
        \centering
        \includegraphics[width=0.95\linewidth]{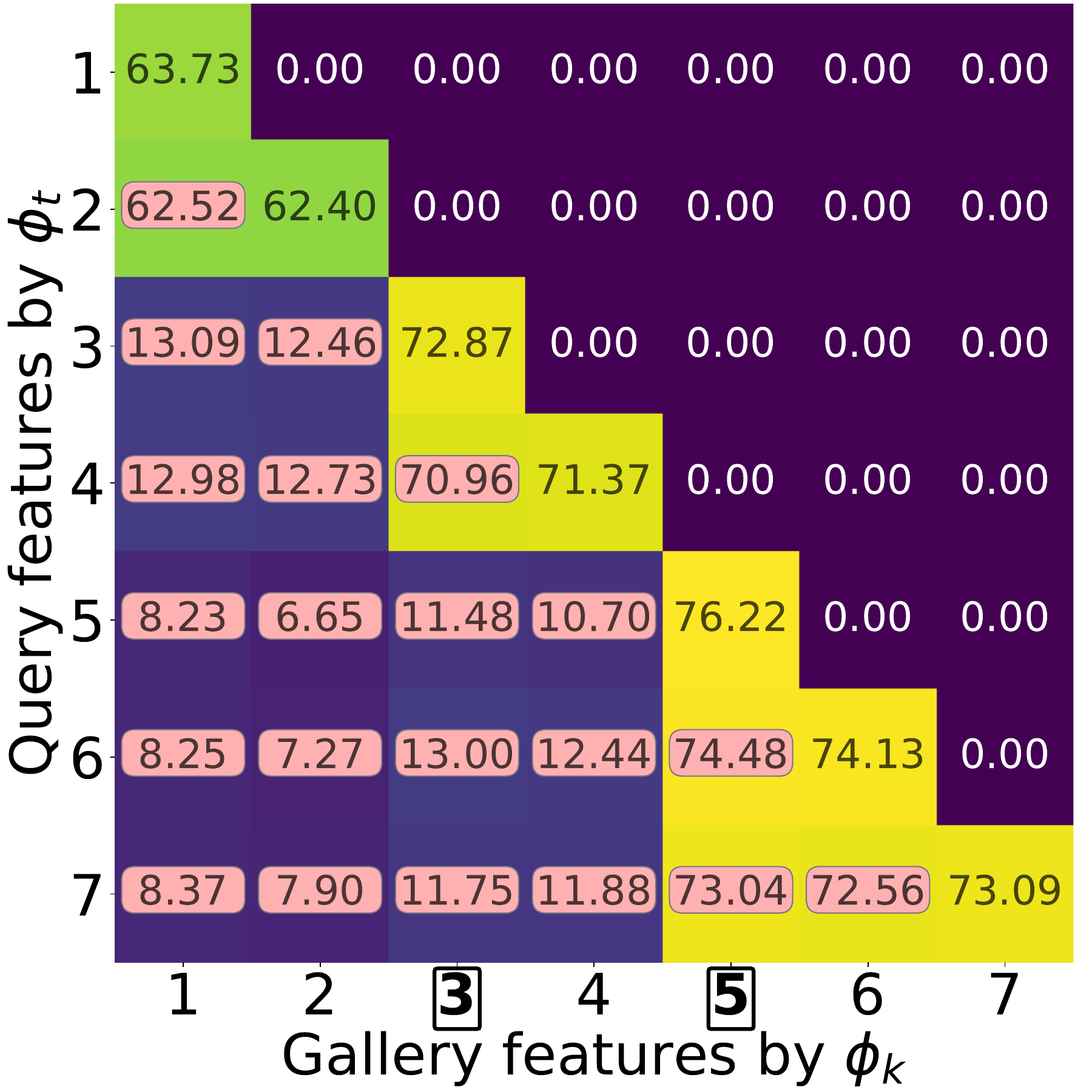}
        \caption{CVS~\cite{Wan_2022_CVPR}}
    \label{fig:cvs-10cifar-iamcl2r}
    \end{subfigure}
    \hspace{-10pt}
    \begin{subfigure}{0.245\linewidth}
        \centering
        \includegraphics[width=0.95\linewidth]{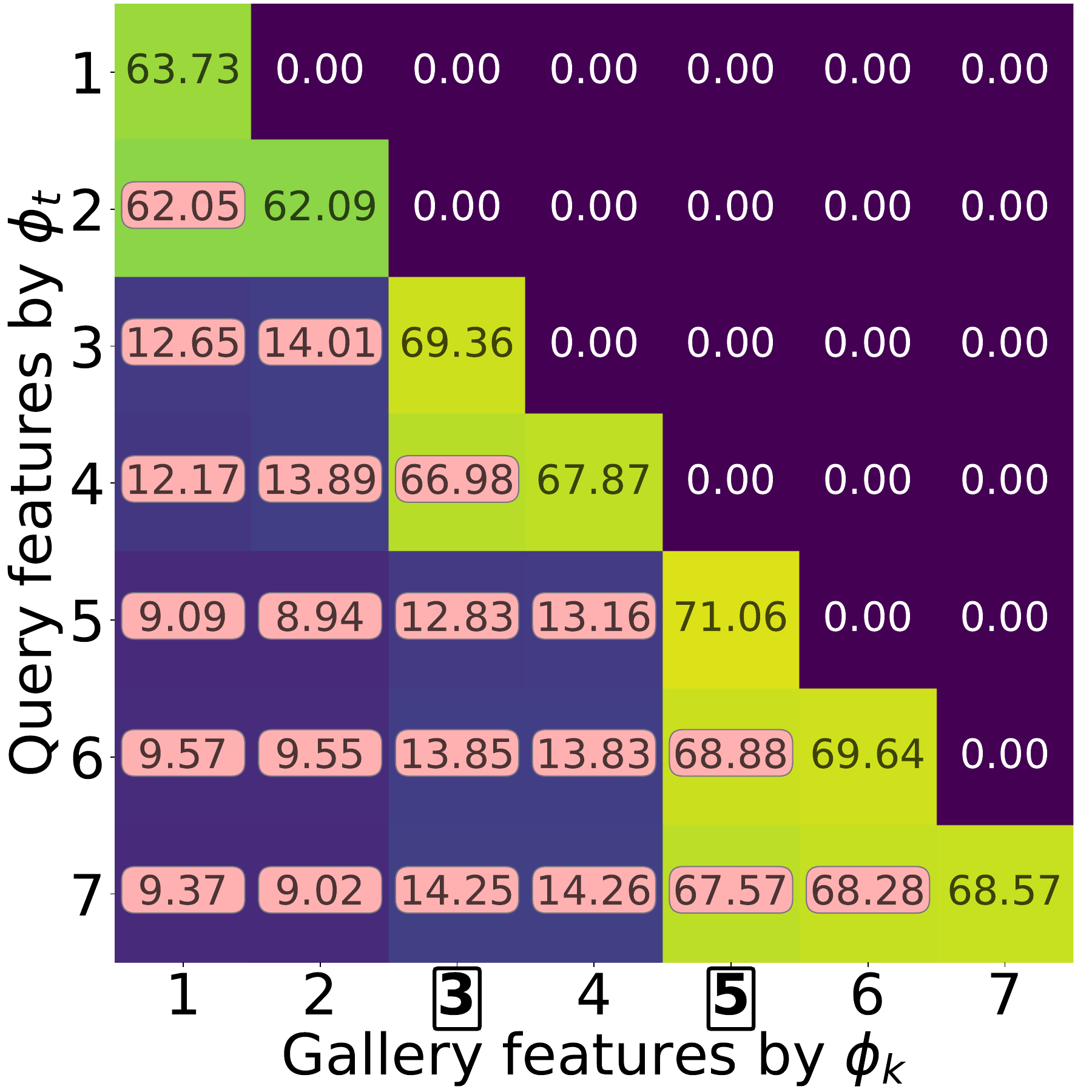}
        \caption{BCT-ER~\cite{shen2020towards}}
    \label{fig:bct-10cifar-iamcl2r}
    \end{subfigure}
    \hspace{-10pt}
    \begin{subfigure}{0.245\linewidth}
        \centering
        \includegraphics[width=0.95\linewidth]{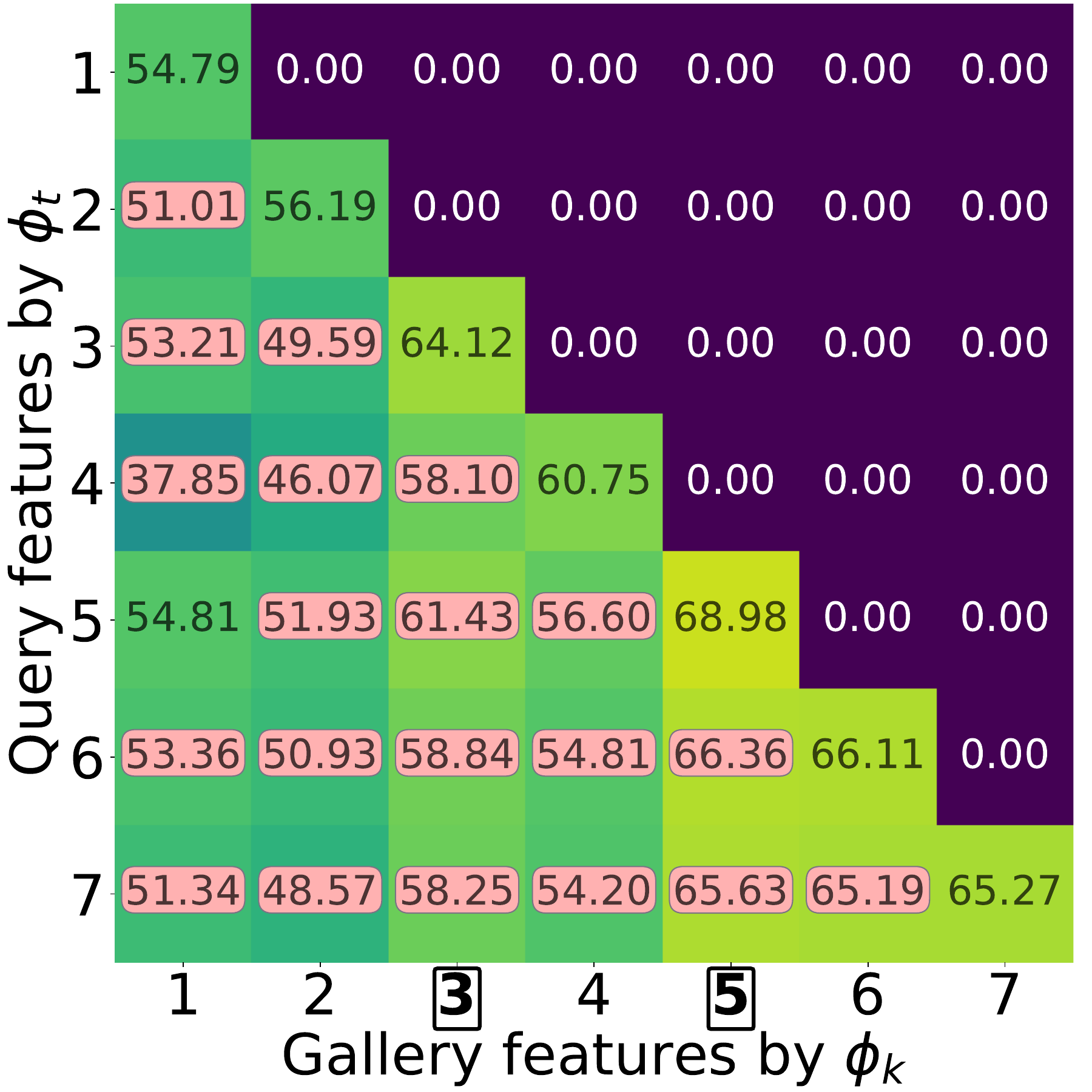}
        \caption{$d$-Simplex-FD~\cite{biondi2023cl2r}}
    \label{fig:simplex-fd-10cifar-iamcl2r}
    \end{subfigure}
    \hspace{-10pt}
    \begin{subfigure}{0.245\linewidth}
        \centering
        \includegraphics[width=0.95\linewidth]{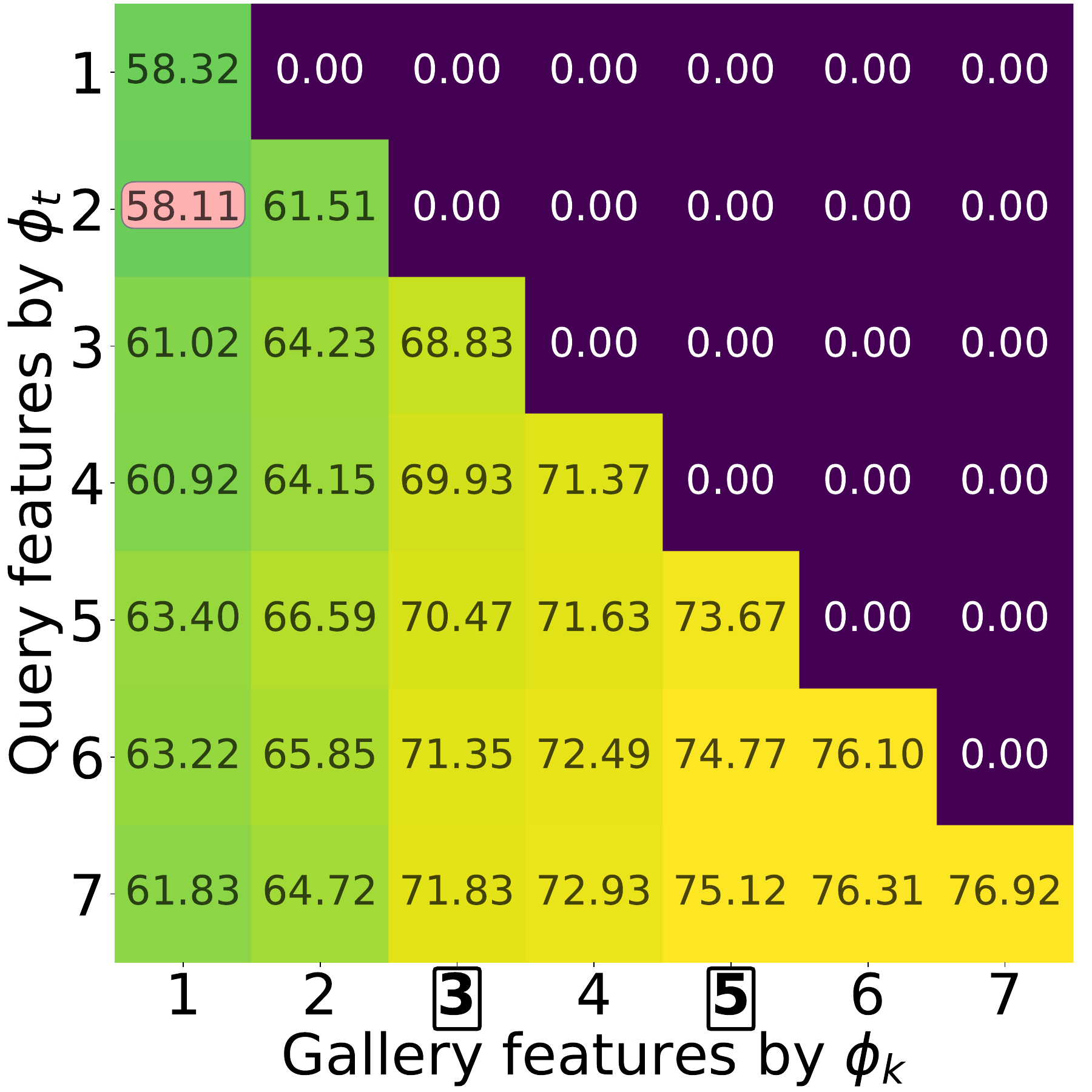}
        \caption{$d$-Simplex-HOC (in this paper)}
    \label{fig:our-10cifar-iamcl2r}
    \end{subfigure}
    \hspace{-16pt}
    \begin{subfigure}{0.08\linewidth}
        \centering
        \includegraphics[width=0.385\linewidth]{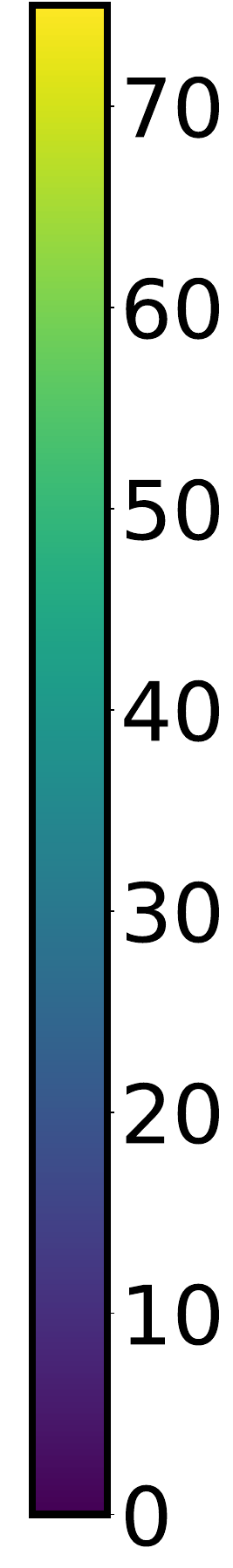}        
        \vspace{0.85cm}
    \end{subfigure}
    \caption{Compatibility Matrices for $d$-Simplex-HOC, CVS, BCT-ER, and $d$-Simplex-FD on CIFAR100R/10 across $7$ tasks. Model replacements at tasks $3$ and $5$ are highlighted in bold. Entries failing to meet compatibility criteria as defined in \cite{shen2020towards} are marked with a light-red background.
    \vspace{-0.3cm}
    }
    \label{fig:cifar-10cl2r-iamcl2r}
\end{figure*}

\noindent
\textbf{Fine-tuning.}
We replicate the fact that dataset size for training third-party models is typically significantly larger than the dataset size used for fine-tuning~\cite{ramasesh2021effect}.
According to this, pre-trained models are fine-tuned with a reduced version of CIFAR100~\cite{Krizhevsky2009LearningML} denoted in this paper as CIFAR100R. 

We considered two distinct task sequences consisting of 7 and 31 tasks each.
We fine-tune the pre-trained model with an initial task comprising 10 classes. Subsequently, for the sequences of 7 and 31 tasks, the respective tasks contain 15 and 3 classes each.
The fine-tuning process incorporates incoming task data, consisting of 300 images per class, and utilizes an episodic memory that stores 20 images from each class of previous tasks.

\noindent
\textbf{Model Replacement.}
In our experiments, we verify the impact of replacing the current fine-tuned model with two improved models pre-trained elsewhere. The two replacements occur while fine-tuning on CIFAR100R: at the third and fifth tasks in the shorter sequence, and at the eleventh and twenty-first tasks in the longer sequence. 
We also consider the challenging scenario of improved model replacement considering more sophisticated network architectures.

The $d$-Simplex fixed classifier is pre-allocated with a number of classes $K$, ensuring enough space to accommodate future classes for both pre-training and fine-tuning.
Class assignments for pre-training are made from left to right, and for fine-tuning, from right to left. This straightforward convention is used to ensure that classes assigned for pre-training and fine-tuning remain distinct, without overlap. 
Other non overlapping assignment methods could also be used.

\noindent
\textbf{Network Architectures.}
We use ResNet18~\cite{he2016deep} as network architecture.
In the scenario using more sophisticated network architectures, we initially replace ResNet18 with SENet18~\cite{hu2018squeeze}, followed by a subsequent replacement with a RegNetY\_400MF~\cite{radosavovic2020designing}.

\noindent
\textbf{Hyper-parameters.}
The ResNet18, SENet18, and RegNetY\_400MF models were pre-trained on ImageNet32 using the following hyper-parameters: 300 epochs, a batch size of 128, and an initial SGD optimizer learning rate of 0.1, which was adjusted using a Cosine Annealing schedule. 
For each task used for fine-tuning, the model is trained for 70 epochs with a batch size of 128, starting with a learning rate of 0.001 that was reduced by a factor of 10 after the 50th and 64th epochs.
The $d$-Simplex was pre-allocated with $K=1024$ classes (i.e., $d=K-1$).

\noindent
\textbf{Performance Evaluation.}
The evaluation focuses on the open-set recognition task, in which separated datasets for training and evaluation are required. The standard 1:N search protocol, applicable to re-identification and similar tasks \cite{shen2020towards}, is employed in the evaluation.
To ensure strict separation between datasets, the CIFAR10 dataset is utilized for evaluation during fine-tuning with CIFAR100R. Specifically, the test set of CIFAR10, comprising 10,000 images, is used as the gallery set, while its training set of 50,000 images serves as the query set.

Following \cite{shen2020towards} and \cite{biondi2023cores}, 
we measure performance progression across the two sequences of tasks using two established metrics: Average Compatibility ($AC$) and Average multi-model Accuracy (referred shortly as to $AA_t$). 
The metric $AC$ quantifies the extent of compatibility across all possible pairs of model combinations by providing a normalized count of times in which compatibility is achieved. Conversely, $AA_t$ calculates the mean accuracy across all combinations of the previously learned models until task $t$, providing an overall measure of accuracy.

\subsection{IAM-CL$^2$R: Comparative Results}
\label{sec:experimental_results}
We performed a comparative analysis of $d$-Simplex-HOC against FAN~\cite{iscen2020memory}, CVS~\cite{Wan_2022_CVPR}, $d$-Simplex-FD~\cite{biondi2023cl2r}, and the lifelong adapted versions of BCT~\cite{shen2020towards} (BCT-ER), LCE~\cite{meng2021learning} (LCE-ER), and AdvBCT~\cite{pan2023boundary} (AdvBCT-ER). The experiments also incorporate a baseline method, Experience Replay (ER), in which the model is fine-tuned using cross-entropy loss on data of the new task and an episodic memory. 
Ablation studies of IAM-CL$^2$R with the $d$-Simplex-HOC are provided in the Appendix.
\begin{figure}[t]
    \centering
    \begin{subfigure}{\linewidth}
        \hspace{8pt}\includegraphics[width=0.97\linewidth]{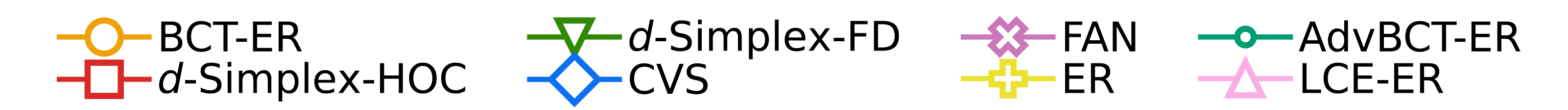}
    \end{subfigure}
    \hspace{-10pt}
    \includegraphics[width=.96\linewidth]{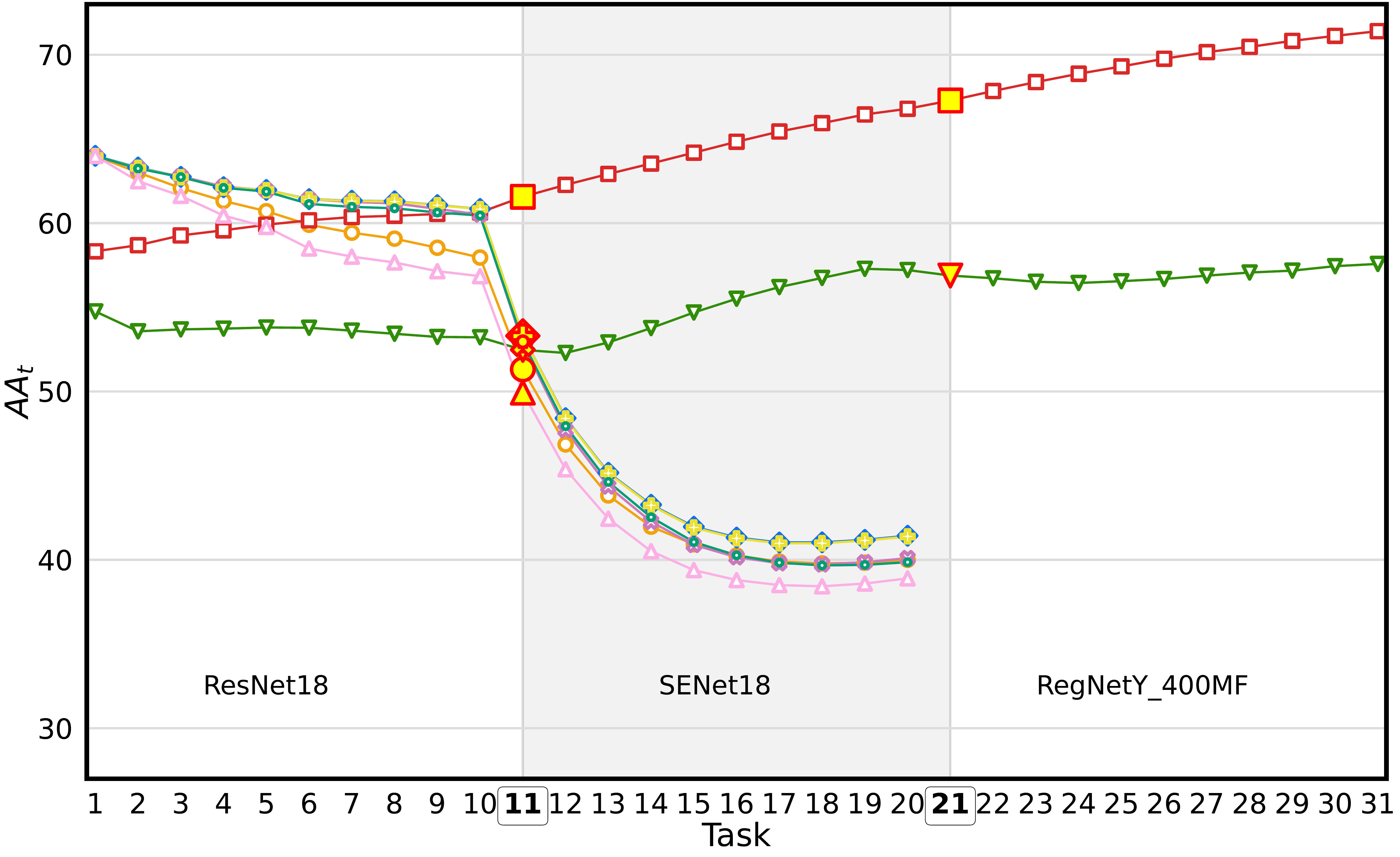}
    \caption{Plots of Average multi-model Accuracy ($AA_t$) for $31$ tasks on CIFAR100R/10, showing the impact of model replacements with different network architectures at tasks $11$ and $21$.
    }
    \label{fig:cifar31-iamcl2r-diff-arch}
\end{figure}

\noindent
\textbf{Replacing: Same Architecture, Expanded Data.} Fig.~\ref{fig:iamcl2r} presents the Average multi-model Accuracy at task $t$ ($AA_{t}$) for learning scenarios with model replacement as depicted in Fig.~\ref{fig:cifar30-iamcl2r} and for those without as depicted in Fig.~\ref{fig:cifar30-cl2r}. The experiment involves fine-tuning a ResNet18 model across $31$ tasks.
The comparison provides insights into the performance benefits that can be obtained by replacing models when representations are trained in a compatible manner.
The $d$-Simplex-HOC effectively incorporates improvements from model replacements, 
showing increased performance compared to the case without model replacement, as indicated in Fig.~\ref{fig:cifar30-cl2r}. The $d$-Simplex-FD demonstrates a similar capability, though to a reduced extent.
The other methods have a clear performance decay after model replacements and end up with a worse performance than the case without replacement. This can be attributed to the fact that after replacement, fine-tuning is applied to a model obtained by retraining the network from scratch, leading to an entirely different representation.

Further performance details, as indicated by the self and cross-test accuracy values, are shown according to the compatibility matrices \cite{shen2020towards,biondi2023cores}.
Fig.~\ref{fig:cifar-10cl2r-iamcl2r} shows these values for CVS, BCT-ER, $d$-Simplex-FD, and $d$-Simplex-HOC in the 7 tasks sequence. 
The values reveal that the $d$-Simplex-HOC effectively leverages the improved expressive power of the models after replacement, except in one instance. This exception, where the model is not compatible and the cross-test accuracy falls below the self-test accuracy, is shown in Fig.~\ref{fig:our-10cifar-iamcl2r}.
Both CVS and BCT-ER score near zero cross-tests accuracy after model replacements as indicated by the values in the blue sub matrix blocks shown in Fig.~\ref{fig:cvs-10cifar-iamcl2r} and Fig.~\ref{fig:bct-10cifar-iamcl2r}. This leads to mostly non-compatible representations.
Although both $d$-Simplex-HOC and $d$-Simplex-FD utilize the $d$-Simplex fixed classifier to learn stationary representations, the former shows better performance. This can be attributed to the high-order alignment achievable through the HOC loss.
To provide a full evaluation of compatibility, the $\mathit{AA}_t$ of Fig.~\ref{fig:iamcl2r} is complemented with the Average Compatibility $AC$ in Tab.~\ref{tab:iamcl2r_25tasks}. We also report the Average multi-model Accuracy $AA_7$ and $AA_{31}$  for methods compared at the end of the \mbox{$7$-th} and \mbox{$31$-th} task, respectively.  It is observed that, in both instances, all models---with the exception of $d$-Simplex-HOC---fail to achieve significant compatibility performance. 

\begin{table}[h]
\small
\centering 
\sisetup{detect-weight=true,detect-all,table-format=-1.3,round-mode=places, round-precision=2}
    \begin{tabular}{l SSSS}
        \toprule 
        \multirow{4}{*}{\shortstack{\textsc{Method}}}
        & \multicolumn{2}{c}{\textcolor{black}{7 tasks}} & \multicolumn{2}{c}{\textcolor{black}{31 tasks}} \\
        \cmidrule(lr){2-3}  \cmidrule(lr){4-5} 
        
        &
        {$AC$} &
        {$AA_7$ } &
        {$AC$} &
        {$AA_{31}$ }
        \\ 
        \midrule   
    ER baseline &$\times$& 36.217 & \hspace{-7pt}{$<$0.01} & 31.299 \\
    FAN~\cite{iscen2020memory}  &$\times$& 36.316 & \hspace{-7pt}{$<$0.01}  & 30.787 \\
    BCT-ER~\cite{shen2020towards}  & $\times$ & 35.59 &$\times$& 29.878  \\ 
    LCE-ER~\cite{meng2021learning}  &$\times$& 34.891&$\times$& 29.299 \\
    AdvBCT-ER~\cite{pan2023boundary} & $\times$& 35.732 &$\times$& 30.097 \\
    CVS~\cite{Wan_2022_CVPR}  &  $\times$ & 36.306 & 0.006452 & 31.344 \\
    $d$-Simplex-FD~\cite{biondi2023cl2r} & 0.04762 & 56.581  & 0.2108 & 56.267 \\
    $d$-Simplex-HOC & \textbf{0.95} & \hspace{-3pt}\textbf{68.13} &  \textbf{0.65} & \hspace{-3pt}\textbf{67.40} \\
        
        \bottomrule
    \end{tabular}  
    \captionof{table}{Compatibility metrics with CIFAR100R/10 for $7$ tasks with model replacements at task $3$ and task $5$, and $31$ tasks with model replacements at task $11$ and task $21$. ``$\times$'' indicates the case in which compatibility is not achieved.
    }
    \label{tab:iamcl2r_25tasks}
\end{table}

\noindent
\textbf{Replacing: Different Architectures, Expanded Data.}
Fig.~\ref{fig:cifar31-iamcl2r-diff-arch} shows the performance of the evaluated methods when the original ResNet18 is replaced first by a SENet18 and then by a more expressive RegNetY\_400MF. It is observed that the change of network architecture not only does not adversely affect compatibility in the $d$-Simplex-HOC but takes advantage of their more expressive representation power. 
In particular, direct comparison of Fig.~\ref{fig:cifar31-iamcl2r-diff-arch} with Fig.~\ref{fig:cifar-10cl2r-iamcl2r} shows that $d$-Simplex-HOC improves performance gradually with each model replacement. This is in contrast to $d$-Simplex-FD, which does not demonstrate the same trends leading to a plateau around the 20-th task.
Given the different feature sizes before and after the second model replacement with the RegNetY\_400MF architecture---$512$ and $384$, respectively---all methods except $d$-Simplex-HOC and $d$-Simplex-FD 
require non-trivial extensions to adapt to the changed feature size. According to this, for these methods, evaluation cannot be reported.

\section{Conclusion}
\label{sec:conclusion}
In this paper, we have investigated the concept of learning compatible representations through the principle of stationarity. We demonstrated that stationary representations optimally approximate compatibility according to its definition.
We demonstrated that better model alignment through higher-order dependencies can be obtained by training with a loss derived from one of the compatibility inequality constraints.
Finally, empirical evidence confirmed that stationary representations enable uninterrupted retrieval service allowing for fine-tuning and model replacement to occur concurrently and asynchronously with limited interference.

\noindent
\textbf{Acknowledgment:}
This work was partially supported by the European Commission under European Horizon 2020 Programme, grant number 951911 - AI4Media.
\\
\indent
We acknowledge the CINECA award under the ISCRA initiative, for the availability of high-performance computing resources and support (ISCRA-C ID:~HP10C4TIIM).

{\small
\bibliographystyle{unsrt}
\bibliography{11_references}

\begin{thebibliography}{10}

\bibitem{chopra2005learning}
Sumit Chopra, Raia Hadsell, and Yann LeCun.
\newblock Learning a similarity metric discriminatively, with application to face verification.
\newblock In {\em 2005 IEEE Computer Society Conference on Computer Vision and Pattern Recognition (CVPR'05)}, volume~1, pages 539--546. IEEE, 2005.

\bibitem{bengio2013representation}
Yoshua Bengio, Aaron Courville, and Pascal Vincent.
\newblock Representation learning: A review and new perspectives.
\newblock {\em IEEE transactions on pattern analysis and machine intelligence}, 35(8):1798--1828, 2013.

\bibitem{sharif2014cnn}
Ali Sharif~Razavian, Hossein Azizpour, Josephine Sullivan, and Stefan Carlsson.
\newblock Cnn features off-the-shelf: an astounding baseline for recognition.
\newblock In {\em Proceedings of the IEEE conference on computer vision and pattern recognition workshops}, pages 806--813, 2014.

\bibitem{YosinskiNIPS2014}
Jason Yosinski, Jeff Clune, Yoshua Bengio, and Hod Lipson.
\newblock How transferable are features in deep neural networks?
\newblock In Z.~Ghahramani, M.~Welling, C.~Cortes, N.~Lawrence, and K.~Q. Weinberger, editors, {\em Advances in Neural Information Processing Systems}, volume~27. Curran Associates, Inc., 2014.

\bibitem{taigman2014deepface}
Yaniv Taigman, Ming Yang, Marc'Aurelio Ranzato, and Lior Wolf.
\newblock Deepface: Closing the gap to human-level performance in face verification.
\newblock In {\em Proceedings of the IEEE conference on computer vision and pattern recognition}, pages 1701--1708, 2014.

\bibitem{sun2015deepid3}
Yi~Sun, Ding Liang, Xiaogang Wang, and Xiaoou Tang.
\newblock {DeepID3}: Face recognition with very deep neural networks.
\newblock {\em arXiv preprint arXiv:1502.00873}, 2015.

\bibitem{Ranjan2017}
Rajeev Ranjan, Carlos~D Castillo, and Rama Chellappa.
\newblock L2-constrained softmax loss for discriminative face verification.
\newblock {\em arXiv preprint arXiv:1703.09507}, 2017.

\bibitem{DBLP:conf/cvpr/DengGXZ19}
Jiankang Deng, Jia Guo, Niannan Xue, and Stefanos Zafeiriou.
\newblock Arcface: Additive angular margin loss for deep face recognition.
\newblock In {\em Proceedings of the IEEE/CVF Conference on Computer Vision and Pattern Recognition}, pages 4690--4699, 2019.

\bibitem{meng2021magface}
Qiang Meng, Shichao Zhao, Zhida Huang, and Feng Zhou.
\newblock {MagFace}: A universal representation for face recognition and quality assessment.
\newblock In {\em Proceedings of the IEEE/CVF Conference on Computer Vision and Pattern Recognition}, 2021.

\bibitem{Sun2018Beyond}
Yifan Sun, Liang Zheng, Yi~Yang, Qi~Tian, and Shengjin Wang.
\newblock Beyond part models: Person retrieval with refined part pooling (and a strong convolutional baseline).
\newblock In {\em European Conference on Computer Vision}, 2018.

\bibitem{Hermans2017In}
Alexander Hermans, Lucas Beyer, and Bastian Leibe.
\newblock In defense of the triplet loss for person re-identification.
\newblock {\em arXiv preprint arXiv:1703.07737}, 2017.

\bibitem{ristani2018features}
Ergys Ristani and Carlo Tomasi.
\newblock Features for multi-target multi-camera tracking and re-identification.
\newblock In {\em European Conference on Computer Vision}, pages 6036--6046, 2018.

\bibitem{radenovic2018revisiting}
Filip Radenovi{\'c}, Ahmet Iscen, Giorgos Tolias, Yannis Avrithis, and Ond{\v{r}}ej Chum.
\newblock Revisiting oxford and paris: Large-scale image retrieval benchmarking.
\newblock In {\em Proceedings of the IEEE conference on computer vision and pattern recognition}, pages 5706--5715, 2018.

\bibitem{radenovic2018fine}
Filip Radenovi{\'c}, Giorgos Tolias, and Ond{\v{r}}ej Chum.
\newblock Fine-tuning cnn image retrieval with no human annotation.
\newblock {\em IEEE transactions on pattern analysis and machine intelligence}, 41(7):1655--1668, 2018.

\bibitem{chen2022deep}
Wei Chen, Yu~Liu, Weiping Wang, Erwin~M Bakker, Theodoros Georgiou, Paul Fieguth, Li~Liu, and Michael~S Lew.
\newblock Deep learning for instance retrieval: A survey.
\newblock {\em IEEE Transactions on Pattern Analysis and Machine Intelligence}, 2022.

\bibitem{brown2020language}
Tom Brown, Benjamin Mann, Nick Ryder, Melanie Subbiah, Jared~D Kaplan, Prafulla Dhariwal, Arvind Neelakantan, Pranav Shyam, Girish Sastry, Amanda Askell, et~al.
\newblock Language models are few-shot learners.
\newblock {\em Advances in neural information processing systems}, 33:1877--1901, 2020.

\bibitem{radford2021learning}
Alec Radford, Jong~Wook Kim, Chris Hallacy, Aditya Ramesh, Gabriel Goh, Sandhini Agarwal, Girish Sastry, Amanda Askell, Pamela Mishkin, Jack Clark, et~al.
\newblock Learning transferable visual models from natural language supervision.
\newblock In {\em International Conference on Machine Learning}, pages 8748--8763. PMLR, 2021.

\bibitem{caccia2021new}
Lucas Caccia, Rahaf Aljundi, Nader Asadi, Tinne Tuytelaars, Joelle Pineau, and Eugene Belilovsky.
\newblock New insights on reducing abrupt representation change in online continual learning.
\newblock In {\em International Conference on Learning Representations}, 2021.

\bibitem{Davari_2022_CVPR}
MohammadReza Davari, Nader Asadi, Sudhir Mudur, Rahaf Aljundi, and Eugene Belilovsky.
\newblock Probing representation forgetting in supervised and unsupervised continual learning.
\newblock In {\em Proceedings of the IEEE/CVF Conference on Computer Vision and Pattern Recognition (CVPR)}, pages 16712--16721, June 2022.

\bibitem{barletti2022contrastive}
Tommaso Barletti, Niccolo' Biondi, Federico Pernici, Matteo Bruni, and Alberto Del~Bimbo.
\newblock Contrastive supervised distillation for continual representation learning.
\newblock {\em International Conference on Image Analysis and Processing}, 2022.

\bibitem{asadi2023prototype}
Nader Asadi, MohammadReza Davari, Sudhir Mudur, Rahaf Aljundi, and Eugene Belilovsky.
\newblock Prototype-sample relation distillation: towards replay-free continual learning.
\newblock In {\em International Conference on Machine Learning}, pages 1093--1106. PMLR, 2023.

\bibitem{wolf2019huggingface}
Thomas Wolf, Lysandre Debut, Victor Sanh, Julien Chaumond, Clement Delangue, Anthony Moi, Pierric Cistac, Tim Rault, R{\'e}mi Louf, Morgan Funtowicz, et~al.
\newblock Huggingface's transformers: State-of-the-art natural language processing.
\newblock {\em arXiv preprint arXiv:1910.03771}, 2019.

\bibitem{touvron2023llama}
Hugo Touvron, Thibaut Lavril, Gautier Izacard, Xavier Martinet, Marie-Anne Lachaux, Timoth{\'e}e Lacroix, Baptiste Rozi{\`e}re, Naman Goyal, Eric Hambro, Faisal Azhar, et~al.
\newblock Llama: Open and efficient foundation language models.
\newblock {\em arXiv preprint arXiv:2302.13971}, 2023.

\bibitem{raffel2023building}
Colin Raffel.
\newblock Building machine learning models like open source software.
\newblock {\em Communications of the ACM}, 66(2):38--40, 2023.

\bibitem{bommasani2021opportunities}
Rishi Bommasani, Drew~A Hudson, Ehsan Adeli, Russ Altman, Simran Arora, Sydney von Arx, Michael~S Bernstein, Jeannette Bohg, Antoine Bosselut, Emma Brunskill, et~al.
\newblock On the opportunities and risks of foundation models.
\newblock {\em arXiv preprint arXiv:2108.07258}, 2021.

\bibitem{sorscher2022beyond}
Ben Sorscher, Robert Geirhos, Shashank Shekhar, Surya Ganguli, and Ari~S Morcos.
\newblock Beyond neural scaling laws: beating power law scaling via data pruning.
\newblock {\em arXiv preprint arXiv:2206.14486}, 2022.

\bibitem{shen2020towards}
Yantao Shen, Yuanjun Xiong, Wei Xia, and Stefano Soatto.
\newblock Towards backward-compatible representation learning.
\newblock In {\em Proceedings of the IEEE/CVF Conference on Computer Vision and Pattern Recognition}, pages 6368--6377, 2020.

\bibitem{meng2021learning}
Qiang Meng, Chixiang Zhang, Xiaoqiang Xu, and Feng Zhou.
\newblock Learning compatible embeddings.
\newblock In {\em Proceedings of the IEEE/CVF International Conference on Computer Vision}, pages 9939--9948, 2021.

\bibitem{biondi2023cores}
Niccolo Biondi, Federico Pernici, Matteo Bruni, and Alberto Del~Bimbo.
\newblock Cores: Compatible representations via stationarity.
\newblock {\em IEEE Transactions on Pattern Analysis and Machine Intelligence}, 2023.

\bibitem{strubell2019energy}
Emma Strubell, Ananya Ganesh, and Andrew McCallum.
\newblock Energy and policy considerations for deep learning in nlp.
\newblock In {\em Proceedings of the 57th Annual Meeting of the Association for Computational Linguistics}, pages 3645--3650, 2019.

\bibitem{price2019privacy}
W~Nicholson Price and I~Glenn Cohen.
\newblock Privacy in the age of medical big data.
\newblock {\em Nature medicine}, 25(1):37--43, 2019.

\bibitem{lewis2020retrieval}
Patrick Lewis, Ethan Perez, Aleksandra Piktus, Fabio Petroni, Vladimir Karpukhin, Naman Goyal, Heinrich K{\"u}ttler, Mike Lewis, Wen-tau Yih, Tim Rockt{\"a}schel, et~al.
\newblock Retrieval-augmented generation for knowledge-intensive nlp tasks.
\newblock {\em Advances in Neural Information Processing Systems}, 33:9459--9474, 2020.

\bibitem{wang2020unified}
Chien{-}Yi Wang, Ya{-}Liang Chang, Shang{-}Ta Yang, Dong Chen, and Shang{-}Hong Lai.
\newblock Unified representation learning for cross model compatibility.
\newblock In {\em 31st British Machine Vision Conference 2020, {BMVC} 2020}. {BMVA} Press, 2020.

\bibitem{zhang2022towards}
Binjie Zhang, Yixiao Ge, Yantao Shen, Shupeng Su, Chun Yuan, Xuyuan Xu, Yexin Wang, and Ying Shan.
\newblock Towards universal backward-compatible representation learning.
\newblock {\em arXiv preprint arXiv:2203.01583}, 2022.

\bibitem{duggal2021compatibility}
Rahul Duggal, Hao Zhou, Shuo Yang, Yuanjun Xiong, Wei Xia, Zhuowen Tu, and Stefano Soatto.
\newblock Compatibility-aware heterogeneous visual search.
\newblock In {\em Proceedings of the IEEE/CVF Conference on Computer Vision and Pattern Recognition}, pages 10723--10732, 2021.

\bibitem{pan2023boundary}
Tan Pan, Furong Xu, Xudong Yang, Sifeng He, Chen Jiang, Qingpei Guo, Feng Qian, Xiaobo Zhang, Yuan Cheng, Lei Yang, et~al.
\newblock Boundary-aware backward-compatible representation via adversarial learning in image retrieval.
\newblock In {\em Proceedings of the IEEE/CVF Conference on Computer Vision and Pattern Recognition}, pages 15201--15210, 2023.

\bibitem{iscen2020memory}
Ahmet Iscen, Jeffrey Zhang, Svetlana Lazebnik, and Cordelia Schmid.
\newblock Memory-efficient incremental learning through feature adaptation.
\newblock In {\em European Conference on Computer Vision}, pages 699--715. Springer, 2020.

\bibitem{Wan_2022_CVPR}
Timmy S.~T. Wan, Jun-Cheng Chen, Tzer-Yi Wu, and Chu-Song Chen.
\newblock Continual learning for visual search with backward consistent feature embedding.
\newblock In {\em Proceedings of the IEEE/CVF Conference on Computer Vision and Pattern Recognition (CVPR)}, pages 16702--16711, June 2022.

\bibitem{biondi2023cl2r}
Niccolo Biondi, Federico Pernici, Matteo Bruni, Daniele Mugnai, and Alberto~Del Bimbo.
\newblock Cl2r: Compatible lifelong learning representations.
\newblock {\em ACM Transactions on Multimedia Computing, Communications and Applications}, 18(2s):1--22, 2023.

\bibitem{trauble2021backward}
Frederik Tr{\"a}uble, Julius Von~K{\"u}gelgen, Matth{\"a}us Kleindessner, Francesco Locatello, Bernhard Sch{\"o}lkopf, and Peter Gehler.
\newblock Backward-compatible prediction updates: A probabilistic approach.
\newblock {\em Advances in Neural Information Processing Systems}, 34:116--128, 2021.

\bibitem{zhou2023bt}
Yifei Zhou, Zilu Li, Abhinav Shrivastava, Hengshuang Zhao, Antonio Torralba, Taipeng Tian, and Ser-Nam Lim.
\newblock Bt\^{} 2: Backward-compatible training with basis transformation.
\newblock In {\em Proceedings of the IEEE/CVF International Conference on Computer Vision}, pages 11229--11238, 2023.

\bibitem{pernici2021regular}
Federico Pernici, Matteo Bruni, Claudio Baecchi, and Alberto Del~Bimbo.
\newblock Regular polytope networks.
\newblock {\em IEEE Transactions on Neural Networks and Learning Systems}, 2021.

\bibitem{Pernici_2019_CVPR_Workshops}
Federico Pernici, Matteo Bruni, Claudio Baecchi, and Alberto Del~Bimbo.
\newblock Maximally compact and separated features with regular polytope networks.
\newblock In {\em Proceedings of the IEEE/CVF Conference on Computer Vision and Pattern Recognition (CVPR) Workshops}, June 2019.

\bibitem{papyan2020prevalence}
Vardan Papyan, XY~Han, and David~L Donoho.
\newblock Prevalence of neural collapse during the terminal phase of deep learning training.
\newblock {\em Proceedings of the National Academy of Sciences}, 117(40):24652--24663, 2020.

\bibitem{oord2018representation}
Aaron van~den Oord, Yazhe Li, and Oriol Vinyals.
\newblock Representation learning with contrastive predictive coding.
\newblock {\em arXiv preprint arXiv:1807.03748}, 2018.

\bibitem{belkin2019reconciling}
Mikhail Belkin, Daniel Hsu, Siyuan Ma, and Soumik Mandal.
\newblock Reconciling modern machine-learning practice and the classical bias--variance trade-off.
\newblock {\em Proceedings of the National Academy of Sciences}, 116(32):15849--15854, 2019.

\bibitem{hoffer2018fixed}
Elad Hoffer, Itay Hubara, and Daniel Soudry.
\newblock Fix your classifier: the marginal value of training the last weight layer.
\newblock In {\em International Conference on Learning Representations}, 2018.

\bibitem{pernici2019fix}
Federico Pernici, Matteo Bruni, Claudio Baecchi, and Alberto Del~Bimbo.
\newblock Fix your features: Stationary and maximally discriminative embeddings using regular polytope (fixed classifier) networks.
\newblock {\em arXiv preprint arXiv:1902.10441}, 2019.

\bibitem{liu2018learning}
Weiyang Liu, Rongmei Lin, Zhen Liu, Lixin Liu, Zhiding Yu, Bo~Dai, and Le~Song.
\newblock Learning towards minimum hyperspherical energy.
\newblock {\em Advances in neural information processing systems}, 31, 2018.

\bibitem{pernici2021class}
Federico Pernici, Matteo Bruni, Claudio Baecchi, Francesco Turchini, and Alberto Del~Bimbo.
\newblock Class-incremental learning with pre-allocated fixed classifiers.
\newblock In {\em 2020 25th International Conference on Pattern Recognition (ICPR)}, pages 6259--6266. IEEE, 2021.

\bibitem{yang2022neural}
Yibo Yang, Haobo Yuan, Xiangtai Li, Zhouchen Lin, Philip Torr, and Dacheng Tao.
\newblock Neural collapse inspired feature-classifier alignment for few-shot class-incremental learning.
\newblock In {\em The Eleventh International Conference on Learning Representations}, 2022.

\bibitem{zhou2022forward}
Da-Wei Zhou, Fu-Yun Wang, Han-Jia Ye, Liang Ma, Shiliang Pu, and De-Chuan Zhan.
\newblock Forward compatible few-shot class-incremental learning.
\newblock In {\em Proceedings of the IEEE/CVF conference on computer vision and pattern recognition}, pages 9046--9056, 2022.

\bibitem{mixon2022neural}
Dustin~G Mixon, Hans Parshall, and Jianzong Pi.
\newblock Neural collapse with unconstrained features.
\newblock {\em Sampling Theory, Signal Processing, and Data Analysis}, 20(2):1--13, 2022.

\bibitem{fang2021exploring}
Cong Fang, Hangfeng He, Qi~Long, and Weijie~J Su.
\newblock Exploring deep neural networks via layer-peeled model: Minority collapse in imbalanced training.
\newblock {\em Proceedings of the National Academy of Sciences}, 118(43):e2103091118, 2021.

\bibitem{graf2021dissecting}
Florian Graf, Christoph Hofer, Marc Niethammer, and Roland Kwitt.
\newblock Dissecting supervised contrastive learning.
\newblock In {\em International Conference on Machine Learning}, pages 3821--3830. PMLR, 2021.

\bibitem{zhu2021geometric}
Zhihui Zhu, Tianyu Ding, Jinxin Zhou, Xiao Li, Chong You, Jeremias Sulam, and Qing Qu.
\newblock A geometric analysis of neural collapse with unconstrained features.
\newblock {\em Advances in Neural Information Processing Systems}, 34:29820--29834, 2021.

\bibitem{yang2022we}
Yibo Yang, Shixiang Chen, Xiangtai Li, Liang Xie, Zhouchen Lin, and Dacheng Tao.
\newblock Inducing neural collapse in imbalanced learning: Do we really need a learnable classifier at the end of deep neural network?
\newblock {\em Advances in Neural Information Processing Systems}, 35:37991--38002, 2022.

\bibitem{kothapalli2022neural}
Vignesh Kothapalli, Ebrahim Rasromani, and Vasudev Awatramani.
\newblock Neural collapse: A review on modelling principles and generalization.
\newblock {\em arXiv preprint arXiv:2206.04041}, 2022.

\bibitem{lenc2015understanding}
Karel Lenc and Andrea Vedaldi.
\newblock Understanding image representations by measuring their equivariance and equivalence.
\newblock In {\em Proceedings of the IEEE conference on computer vision and pattern recognition}, pages 991--999, 2015.

\bibitem{li2015convergent}
Yixuan Li, Jason Yosinski, Jeff Clune, Hod Lipson, and John Hopcroft.
\newblock Convergent learning: Do different neural networks learn the same representations?
\newblock In {\em Feature Extraction: Modern Questions and Challenges}, pages 196--212. PMLR, 2015.

\bibitem{wang2018towards}
Liwei Wang, Lunjia Hu, Jiayuan Gu, Zhiqiang Hu, Yue Wu, Kun He, and John Hopcroft.
\newblock Towards understanding learning representations: To what extent do different neural networks learn the same representation.
\newblock {\em Advances in neural information processing systems}, 31, 2018.

\bibitem{kornblith2019similarity}
Simon Kornblith, Mohammad Norouzi, Honglak Lee, and Geoffrey Hinton.
\newblock Similarity of neural network representations revisited.
\newblock In {\em International conference on machine learning}, pages 3519--3529. PMLR, 2019.

\bibitem{bansal2021revisiting}
Yamini Bansal, Preetum Nakkiran, and Boaz Barak.
\newblock Revisiting model stitching to compare neural representations.
\newblock {\em Advances in neural information processing systems}, 34:225--236, 2021.

\bibitem{Chen_2019_CVPR}
Ken Chen, Yichao Wu, Haoyu Qin, Ding Liang, Xuebo Liu, and Junjie Yan.
\newblock {R3} adversarial network for cross model face recognition.
\newblock In {\em {CVPR}}, pages 9868--9876. Computer Vision Foundation / {IEEE}, 2019.

\bibitem{hu2022learning}
Weihua Hu, Rajas Bansal, Kaidi Cao, Nikhil Rao, Karthik Subbian, and Jure Leskovec.
\newblock Learning backward compatible embeddings.
\newblock In {\em Proceedings of the 28th ACM SIGKDD Conference on Knowledge Discovery and Data Mining}, pages 3018--3028, 2022.

\bibitem{ramanujan2022forward}
Vivek Ramanujan, Pavan Kumar~Anasosalu Vasu, Ali Farhadi, Oncel Tuzel, and Hadi Pouransari.
\newblock Forward compatible training for large-scale embedding retrieval systems.
\newblock In {\em Proceedings of the IEEE/CVF Conference on Computer Vision and Pattern Recognition}, pages 19386--19395, 2022.

\bibitem{Liu2016large_rebuttal}
Weiyang~Liu et~al.
\newblock Large-margin softmax loss for convolutional neural networks.
\newblock {\em ICML}, 2016.

\bibitem{Liu2017sphere_rebuttal}
Weiyang~Liu et~al.
\newblock Sphereface: Deep hypersphere embedding for face recognition.
\newblock {\em CVPR}, 2017.

\bibitem{hestness2017deep}
Joel Hestness, Sharan Narang, Newsha Ardalani, Gregory Diamos, Heewoo Jun, Hassan Kianinejad, Md~Mostofa~Ali Patwary, Yang Yang, and Yanqi Zhou.
\newblock Deep learning scaling is predictable, empirically.
\newblock {\em arXiv preprint arXiv:1712.00409}, 2017.

\bibitem{prato2021scaling}
Gabriele Prato, Simon Guiroy, Ethan Caballero, Irina Rish, and Sarath Chandar.
\newblock Scaling laws for the out-of-distribution generalization of image classifiers.
\newblock {\em ICML 2021 Workshop on Uncertainty and Robustness in Deep Learning.}, 2021.

\bibitem{nakkiran2021deep}
Preetum Nakkiran, Gal Kaplun, Yamini Bansal, Tristan Yang, Boaz Barak, and Ilya Sutskever.
\newblock Deep double descent: Where bigger models and more data hurt.
\newblock {\em Journal of Statistical Mechanics: Theory and Experiment}, 2021(12):124003, 2021.

\bibitem{caballero2023broken}
Ethan Caballero, Kshitij Gupta, Irina Rish, and David Krueger.
\newblock Broken neural scaling laws.
\newblock In {\em The Eleventh International Conference on Learning Representations}, 2023.

\bibitem{burgstaller2009average}
Bernhard Burgstaller and Friedrich Pillichshammer.
\newblock The average distance between two points.
\newblock {\em Bulletin of the Australian Mathematical Society}, 80(3):353--359, 2009.

\bibitem{solomon1978geometric}
Herbert Solomon.
\newblock {\em Geometric probability}.
\newblock SIAM, 1978.

\bibitem{kendallgeometrical}
Maurice G. (Maurice~George) Kendall.
\newblock {\em Geometrical probability}.
\newblock Griffin's statistical monographs \& courses ; no. 10. C. Griffin, Hafner Pub. Co, London ; New York, 1963.

\bibitem{sors2004integral}
Luis Antonio~Santal{\'o} Sors and Luis~A Santal{\'o}.
\newblock {\em Integral geometry and geometric probability}.
\newblock Cambridge university press, 2004.

\bibitem{kirby1982average}
HR~Kirby and John~David Murchland.
\newblock {\em Average Distance Calculations Between and Within Zones: Some Issues at the Interface of Continous and Discrete Models}.
\newblock University of Leeds, Institute for Transport Studies, 1982.

\bibitem{vaughan1984approximate}
Rodney Vaughan.
\newblock Approximate formulas for average distances associated with zones.
\newblock {\em Transportation science}, 18(3):231--244, 1984.

\bibitem{fairthorne1964distances}
David Fairthorne.
\newblock The distances between random points in two concentric circles.
\newblock {\em Biometrika}, 51(1/2):275--277, 1964.

\bibitem{tian2019contrastive}
Yonglong Tian, Dilip Krishnan, and Phillip Isola.
\newblock Contrastive representation distillation.
\newblock In {\em International Conference on Learning Representations}, 2020.

\bibitem{yang2023neural}
Yibo Yang, Haobo Yuan, Xiangtai Li, Zhouchen Lin, Philip Torr, and Dacheng Tao.
\newblock Neural collapse inspired feature-classifier alignment for few-shot class-incremental learning.
\newblock In {\em ICLR}, 2023.

\bibitem{jiang2021churn}
Heinrich Jiang, Harikrishna Narasimhan, Dara Bahri, Andrew Cotter, and Afshin Rostamizadeh.
\newblock Churn reduction via distillation.
\newblock In {\em International Conference on Learning Representations}, 2021.

\bibitem{tian2020contrastive}
Yonglong Tian, Dilip Krishnan, and Phillip Isola.
\newblock Contrastive multiview coding.
\newblock In {\em Computer Vision--ECCV 2020: 16th European Conference, Glasgow, UK, August 23--28, 2020, Proceedings, Part XI 16}, pages 776--794. Springer, 2020.

\bibitem{wen2016discriminative}
Yandong Wen, Kaipeng Zhang, Zhifeng Li, and Yu~Qiao.
\newblock A discriminative feature learning approach for deep face recognition.
\newblock In {\em European Conference on Computer Vision}, pages 499--515. Springer, 2016.

\bibitem{chrabaszcz2017downsampled}
Patryk Chrabaszcz, Ilya Loshchilov, and Frank Hutter.
\newblock A downsampled variant of imagenet as an alternative to the cifar datasets.
\newblock {\em arXiv preprint arXiv:1707.08819}, 2017.

\bibitem{ramasesh2021effect}
Vinay~Venkatesh Ramasesh, Aitor Lewkowycz, and Ethan Dyer.
\newblock Effect of scale on catastrophic forgetting in neural networks.
\newblock In {\em International Conference on Learning Representations}, 2021.

\bibitem{Krizhevsky2009LearningML}
A.~Krizhevsky.
\newblock {Learning Multiple Layers of Features from Tiny Images}.
\newblock Technical report, Univ. Toronto, 2009.

\bibitem{he2016deep}
Kaiming He, Xiangyu Zhang, Shaoqing Ren, and Jian Sun.
\newblock Deep residual learning for image recognition.
\newblock In {\em Proceedings of the IEEE conference on computer vision and pattern recognition}, pages 770--778, 2016.

\bibitem{hu2018squeeze}
Jie Hu, Li~Shen, and Gang Sun.
\newblock Squeeze-and-excitation networks.
\newblock In {\em Proceedings of the IEEE conference on computer vision and pattern recognition}, pages 7132--7141, 2018.

\bibitem{radosavovic2020designing}
Ilija Radosavovic, Raj~Prateek Kosaraju, Ross Girshick, Kaiming He, and Piotr Doll{\'a}r.
\newblock Designing network design spaces.
\newblock In {\em Proceedings of the IEEE/CVF conference on computer vision and pattern recognition}, pages 10428--10436, 2020.

\bibitem{brauchart2018random}
Johann~S Brauchart, Alexander~B Reznikov, Edward~B Saff, Ian~H Sloan, Yu~Guang Wang, and Robert~S Womersley.
\newblock Random point sets on the sphere—hole radii, covering, and separation.
\newblock {\em Experimental Mathematics}, 27(1):62--81, 2018.

\bibitem{chen2020simple}
Ting Chen, Simon Kornblith, Mohammad Norouzi, and Geoffrey Hinton.
\newblock A simple framework for contrastive learning of visual representations.
\newblock In {\em International conference on machine learning}, pages 1597--1607. PMLR, 2020.

\bibitem{icarl}
Sylvestre-Alvise Rebuffi, Alexander Kolesnikov, Georg Sperl, and Christoph~H Lampert.
\newblock icarl: Incremental classifier and representation learning.
\newblock In {\em Proceedings of the IEEE conference on Computer Vision and Pattern Recognition}, pages 2001--2010, 2017.

\bibitem{kasarla2022maximum}
Tejaswi Kasarla, Gertjan~J Burghouts, Max van Spengler, Elise van~der Pol, Rita Cucchiara, and Pascal Mettes.
\newblock Maximum separation as inductive bias in one matrix.
\newblock In {\em NeurIPS}, 2022.

\end{thebibliography}
}

\ifarxiv \clearpage \appendix \appendix

\section{Stationarity-Compatibility Theorem}\label{sec:proof-app}

Before proceeding to the main theorem, a key Lemma is established. This Lemma, concerning the probability of a random point on a surface cap of a hypersphere, plays an essential role in the subsequent discussion.

\begin{lem}
\label{proposition_prob}
Let $\mathbf{w}_i \in \mathbb{R}^d$ for $i=1, \ldots, n$ be i.i.d. vectors from the uniform distribution on the unit hypersphere. Then the probability $P_{n, d}$ of a random vector on a hypersphere cap around $\mathbf{w}_i$ is given by:
\begin{equation}
P_{n, d} = \frac{1}{\sqrt{\pi}} \cdot \sin\left( \theta_{n,d} \right)^{d - 2} \cdot \frac{\Gamma\left( \frac{d}{2} \right)}{\Gamma\left( \frac{d}{2} - \frac{1}{2} \right)}
\label{eq:p_kd}
\end{equation}
where $\theta_{n, d}$ is the expected angle from a vector $\mathbf{w}_i$ to its nearest neighbor.
\end{lem}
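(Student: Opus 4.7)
The plan is to reduce the claim to a direct density computation for the polar angle of a uniform random vector on the unit sphere $S^{d-1}$, measured from the reference direction $\mathbf{w}_i$. By rotational invariance of the uniform measure on $S^{d-1}$, I may assume without loss of generality that $\mathbf{w}_i$ is the north pole, so that only the polar angle $\theta \in [0,\pi]$ governs the computation, while the azimuthal factor on $S^{d-2}$ contributes only a multiplicative constant.

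First I would write the surface measure on $S^{d-1}$ in spherical polar coordinates as
\[
 d\sigma_{d-1}(\theta, \omega) = \sin^{d-2}(\theta)\, d\theta \; d\sigma_{d-2}(\omega),
\]
where $d\sigma_{d-2}$ is the surface measure on the equatorial $S^{d-2}$. Integrating out $\omega$ and dividing by the total area $A_{S^{d-1}}$ produces the marginal density of the polar angle,
\[
 p_d(\theta) \;=\; \frac{A_{S^{d-2}}}{A_{S^{d-1}}}\, \sin^{d-2}(\theta).
\]
Plugging in the closed-form surface areas $A_{S^{m}} = 2\pi^{(m+1)/2}/\Gamma((m+1)/2)$ and simplifying with the identity $A_{S^{d-2}}/A_{S^{d-1}} = \Gamma(d/2)/(\sqrt{\pi}\,\Gamma((d-1)/2))$ recovers exactly the coefficient $\tfrac{1}{\sqrt{\pi}}\,\Gamma(d/2)/\Gamma(d/2-1/2)$ that appears in the statement, using the convention $\Gamma((d-1)/2) = \Gamma(d/2-1/2)$.

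To finish, I would evaluate $p_d$ at the expected nearest-neighbor angle $\theta_{n,d}$, which is well-defined as the mean of the minimum angular distance from $\mathbf{w}_i$ to the remaining $n-1$ i.i.d.\ points. This evaluation matches \eqref{eq:p_kd} verbatim, so the $n$-dependence of $P_{n,d}$ enters solely through the scale at which the polar-angle density is probed: as $n$ grows, $\theta_{n,d}$ shrinks, and the $\sin^{d-2}$ factor captures the high-dimensional sharpening of the cap.

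The main obstacle I expect is conceptual rather than computational: justifying why it is legitimate to interpret $P_{n,d}$ as the density $p_d$ evaluated at the deterministic value $\theta_{n,d}$, rather than as the true probability mass of the cap of radius $\theta_{n,d}$ (which would be $\int_0^{\theta_{n,d}} p_d(\theta)\,d\theta$). The resolution I would pursue is either (i) a local linearization, in which $P_{n,d}$ stands for the probability per unit angular radius at the mean nearest-neighbor scale, or (ii) a concentration argument noting that in high $d$ the nearest-neighbor angle concentrates sharply enough that the cap probability is well-approximated by $p_d(\theta_{n,d})$ times a width that is absorbed into the interpretation of $P_{n,d}$. The residual Gamma-function bookkeeping is routine but should be checked against the half-integer shift $\Gamma((d-1)/2) = \Gamma(d/2 - 1/2)$ used in the statement so that the final prefactor matches exactly.
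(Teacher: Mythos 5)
Your proposal is correct and takes essentially the same route as the paper: the paper computes $P$ as the ratio $A_{\text{disc}}/A$ of the $(d-2)$-dimensional slice of radius $r = R\sin\theta_{n,d}$ to the full hypersphere surface area, which is precisely your marginal polar-angle density $p_d(\theta)=\frac{1}{\sqrt{\pi}}\frac{\Gamma(d/2)}{\Gamma(d/2-1/2)}\sin^{d-2}(\theta)$ evaluated at $\theta = \theta_{n,d}$, with the same Gamma-function bookkeeping. The density-versus-integrated-cap-mass issue you flag is real; the paper resolves it only implicitly, by declaring $A_{\text{disc}}$ to be the area of ``the disc locally approximating the cap,'' i.e., your local-linearization option (i), and never integrates over $[0,\theta_{n,d}]$.
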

\begin{proof}
We begin by noting that the probability $P$ of a random point on a hypersphere cap around prototype $\mathbf{w}_i$ is given by the ratio of the cap surface to the hypersphere's surface area. This can be approximated as $P=\frac{A_{\text{disc}}}{A}$ where $A_{\text{disc}}$ is the area of the disc locally approximating the cap around the prototype $\mathbf{w}_i$. 
The surface area $A$ of a hypersphere in $d$ dimensions is given by
\[
A=2 \pi^{d / 2} \frac{R^{d-1}}{\Gamma(d / 2)}
\]
and the hyperarea $A_{\text{disc}}$ of the disc is
\[
A_{\text{disc}}=2 \pi^{(d-1) / 2} \frac{r^{d-2}}{\Gamma((d-1) / 2)}.
\]
This leads to the simplified expression for the probability $P$ of a random point on a disc on a hypersphere:
\begin{equation}
P=\frac{r^{d-2} \cdot R^{1-d} \cdot \Gamma\left(\frac{d}{2}\right)}{\sqrt{\pi} \cdot \Gamma\left(\frac{d}{2}-\frac{1}{2}\right)}.
\label{eq:probability}
\end{equation}
Where $r$ is the radius of the surface disc (locally approximating the cap), $R$ is the radius of the hypersphere, $d$ is the number of dimensions, and $\Gamma$ is the gamma function.
Using spherical coordinates, the relationship between $R, r$, and the polar angle $\theta$ is $r=R\sin(\theta)$. We use $\theta_{n, d}$ as described in \cite{brauchart2018random} and \cite{DBLP:conf/cvpr/DengGXZ19} to denote the dependencies on $n$ and $d$:
\begin{equation}
\theta_{n, d}=n^{-\frac{2}{d-1}} \Gamma\left(1+\frac{1}{d-1}\right)\left(\frac{\Gamma\left(\frac{d}{2}\right)}{2 \sqrt{\pi}(d-1) \Gamma\left(\frac{d-1}{2}\right)}\right)^{-\frac{1}{d-1}}.
\label{eq:theta_kd}
\end{equation}
Substituting $r = R \sin(\theta_{n,d})$ into the probability $P$ of Eq.~\ref{eq:probability} and, considering  the unit hypersphere $R=1$, we get Eq.~\ref{eq:p_kd}.
This highlights the dependencies of the probability on both the number of prototypes $n$ and their dimension $d$.

\end{proof}

Lemma~\ref{proposition_prob} is used to demonstrate Theorem~\ref{theo:main} that is reported in the following for better comprehension. It is noteworthy that a disc in high dimensional space can be considered a hyperball when referring to its filled volume.

\begin{customthm}{1}[Stationarity $\implies$ Compatibility] 
\label{theo:main}
Let $\mathbf{W}=[ \mathbf{w}_1, \mathbf{w}_2, \ldots, \mathbf{w}_K ]$ be the $d \times K$ matrix of a $d$-Simplex fixed classifier.
Given two tasks, \( \mathcal{T}_k \) and \( \mathcal{T}_t \). The task \( \mathcal{T}_t \) is derived from \( \mathcal{T}_k \) by incorporating an additional training set \( \Delta\mathcal{T} \), such that \( \mathcal{T}_t = \mathcal{T}_k \cup \Delta\mathcal{T} \). The combined task, \( \mathcal{T}_t \), comprises a set of classes each denoted by $y$, where \( {y} \in \{ 1,2,\dots,K_t \} \) and \( K_t < K \).
Under the assumption that learning the new task \( \mathcal{T}_t \) causes the hyperball \( \mathcal{B}_k(\mathbf{w}_y) \) with radius \( r_k^y \) to shrink into a smaller hyperball \( \mathcal{B}_t(\mathbf{w}_y) \), i.e., \( r_{t}^y \leq r_k^y \) for all \( y \) in the set \( \{ 1,2, \dots, K_k \} \), then it necessarily follows that \( \phi_{t} \) and \( \phi_{k} \) optimally approximate the compatibility inequality constraints as defined in Def.~\ref{def:compatibility-shen} in expectation.
\end{customthm}

\begin{proof}
Let \( \phi_t(\mathbf{x}) \) and \( \phi_k(\mathbf{x}) \) be random variables representing the learned representations up to the \( t \)-th and the \( k \)-th task, respectively. We assume that these variables are distributed within hyperballs denoted as $\mathcal{B}_t(\mathbf{w}_y)$ and $\mathcal{B}_k(\mathbf{w}_y)$, where $y$ is a generic class label, according to the joint probability density function $f_{\phi_t(\mathbf{x}),\phi_k(\mathbf{x})}$.
Hyperballs are centered at the $d$-Simplex classifier prototype $\mathbf{w}_y$ and are defined as:
\begin{align}
\label{eq:ball_t}
\mathcal{B}_t(\mathbf{w}_y) &= \{\phi_t(\mathbf{x}) \in \mathbb{R}^d: ||\phi_t(\mathbf{x})-\mathbf{w}_y||_2 \leq r_t^y \},\\
\label{eq:ball_k}
\mathcal{B}_k(\mathbf{w}_y) &= \{\phi_k(\mathbf{x}) \in \mathbb{R}^d: ||\phi_k(\mathbf{x})-\mathbf{w}_y||_2 \leq r_k^y \}
\end{align}
being $r_t^y$ and $r_k^y$ the radii of $\mathcal{B}_t(\mathbf{w}_y)$ and $\mathcal{B}_k(\mathbf{w}_y)$, respectively.
The distance between the two random variables $\phi_t(\mathbf{x}_a)$ and  $\phi_k(\mathbf{x}_b)$ is a new random variable:
\begin{equation}
D_{k,t} =||\phi_t(\mathbf{x}_a)-\phi_k(\mathbf{x}_b)||.
\label{eq:ball_dist}
\end{equation}
\noindent
Verification in expectation of the compatibility definition of Def.~\ref{def:compatibility-shen} requires the evaluation of $D_{k,t}$, i.e, \mbox{ $\mathbb{E}[||\phi_k(\mathbf{x}_a)-\phi_t(\mathbf{x}_b)||]$ }, and compare it with the expected value of  $D_{k,k}$, \mbox{i.e., $\mathbb{E}[||\phi_k(\mathbf{x}_a)-\phi_k(\mathbf{x}_b)||]$}. 
Defining 
the function $g$ as:
$$
g\left(x_a, x_b\right)=||x_a-x_b||,
$$
the expected value $\mathbb{E}[D_{k,t}]$ of Eq.~\ref{eq:ball_dist} is given by:
\begin{equation}
\label{eq:distance_MC_ind}
\mathbb{E}[D_{k,t}]=\int\displaylimits_{\mathcal{B}_k^{y_i}} \int\displaylimits_{\mathcal{B}_t^{y_j}} g\left(x_a, x_b\right) f_{\phi_{k}, \phi_{t}}\left(x_a, x_b\right)dV( x_a) dV( x_b )
\end{equation}
where $y_i$ and $y_j$ denote the classes associated with $x_a$ and $x_a$, respectively, and $\mathcal{B}_k^{y_i}$, $\mathcal{B}_t^{y_j}$, and $f_{\phi_t, \phi_k}$ are simplified notations for  $\mathcal{B}_k(\mathbf{w}_{y_i})$, $\mathcal{B}_t(\mathbf{w}_{y_j})$, and $f_{\phi_t (\mathbf{x}), \phi_k (\mathbf{x})}$, respectively.

Eq.~\ref{eq:distance_MC_ind} is evaluated under the following assumptions: 
(1) UFM \cite{mixon2022neural}, which allows features of a model to be considered independent.
(2) The hypothesis of a $d$-Simplex fixed classifier. This assumption allows focusing on a single pairwise class interaction, as interactions with all other classes are symmetrically similar and fixed.
(3) Since $\phi_t(\mathbf{x})$ and $\phi_k(\mathbf{x})$ are derived from training two separate models, they are treated as independent random variables, each distributed according to $f_{\phi_t(\mathbf{x})}$ and $f_{\phi_k(\mathbf{x})}$, respectively.
As a consequence, the joint probability density function can be substituted by the product of the probability density functions of $\phi_{k}(\mathbf{x})$ and $\phi_{t}(\mathbf{x})$, i.e., 
$f_{\phi_{k}(\mathbf{x}), \phi_{t}(\mathbf{x})}\left(x_a, x_b\right)=f_{ \phi_{k}(\mathbf{x}) }\left(x_a\right) f_{\phi_{t}(\mathbf{x})}\left(x_b\right)$ and integral of Eq.~\ref{eq:distance_MC_ind} reduces to:
\begin{equation}
\label{eq:distance_MC}
\mathbb{E}[D_{k,t}] = 
\int\displaylimits_{\mathcal{B}_k^{y_j}} \int\displaylimits_{\mathcal{B}_t^{y_i}} ||x_a-x_b|| f_{ \phi_{k} }\left(x_a\right) f_{\phi_{t}}\left(x_b\right) dV( x_a) dV( x_b ).
\end{equation}
Lemma~\ref{proposition_prob} allows for the case-by-case evaluation of Equation~\ref{eq:distance_MC} in the case of assessing the alignment and compatibility of class prototypes in trainable and non-trainable classifiers.
From the Lemma it follows that when retraining a model from scratch in which the classifier is trainable, the probability of class prototypes falling, according to Nearest Neighbor rule, within their corresponding hyperballs of a previously trained model decreases exponentially as both dimensionality and the number of classes for training increases (Fig.~\ref{fig:pdisc-label}). 
Following the definition of Eq.~\ref{eq:first}, the conditions for optimal compatibility between prototypes of corresponding classes in both models are realized when their distance reaches its minimum value.
This occurs when they are perfectly aligned. In this case, classes will not manifest randomly and the probability of them falling within the same regions does not decrease exponentially.

Eq.~\ref{eq:theta_kd} in Lemma~\ref{proposition_prob}, also indicates that the introduction of new classes results in a decrease in the angles between them, a phenomenon also shown in \cite{DBLP:conf/cvpr/DengGXZ19}. Assuming two perfectly aligned models, the introduction of new classes in one of the models results in two effects: a decrease in intraclass and interclass distances between features. 
Such reductions in distance indicate a deviation from the concentric arrangement between of corresponding class hyperballs in the two models, leading to a compromise of the conditions for optimal compatibility.
While one might consider pre-allocating a large number of classes to leverage a broader representation space for future classes to prevent the reduction of class angles, this strategy is found to be suboptimal in trainable classifiers. In fact, without supervision, the pre-allocated prototypes for future classes tend to collapse onto each other, as evidenced by \cite{fang2021exploring,yang2022we}. This tendency illustrates the inherent limitations of this approach in achieving optimal compatibility with trainable classifiers.
\begin{figure}
    \centering
    \includegraphics[width=1.0\linewidth]{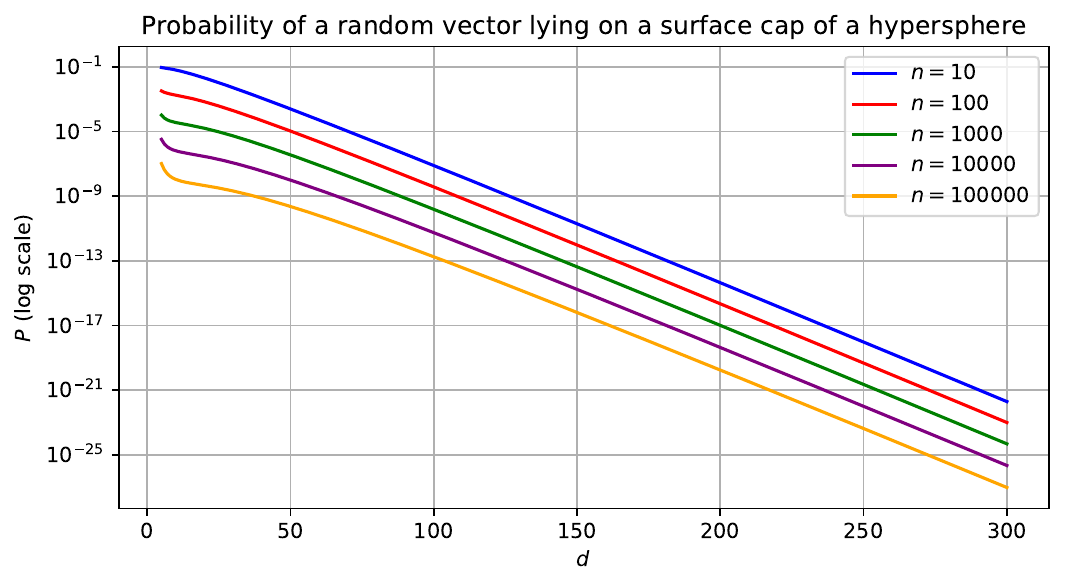}
    \caption{The probability $P$ of Eq.~\ref{eq:p_kd} of a point lying within a disc on a hypersphere's surface. Different curves (logarithmic scale) correspond to varying numbers of points sampled ($n$), across a dimension range ($d$). The plot shows that as the dimension and the number of points increases, the probability decreases significantly, reflecting the curse of dimensionality.}
    \label{fig:pdisc-label}
\end{figure}
\begin{figure}
    \centering
    \includegraphics[width=\linewidth]{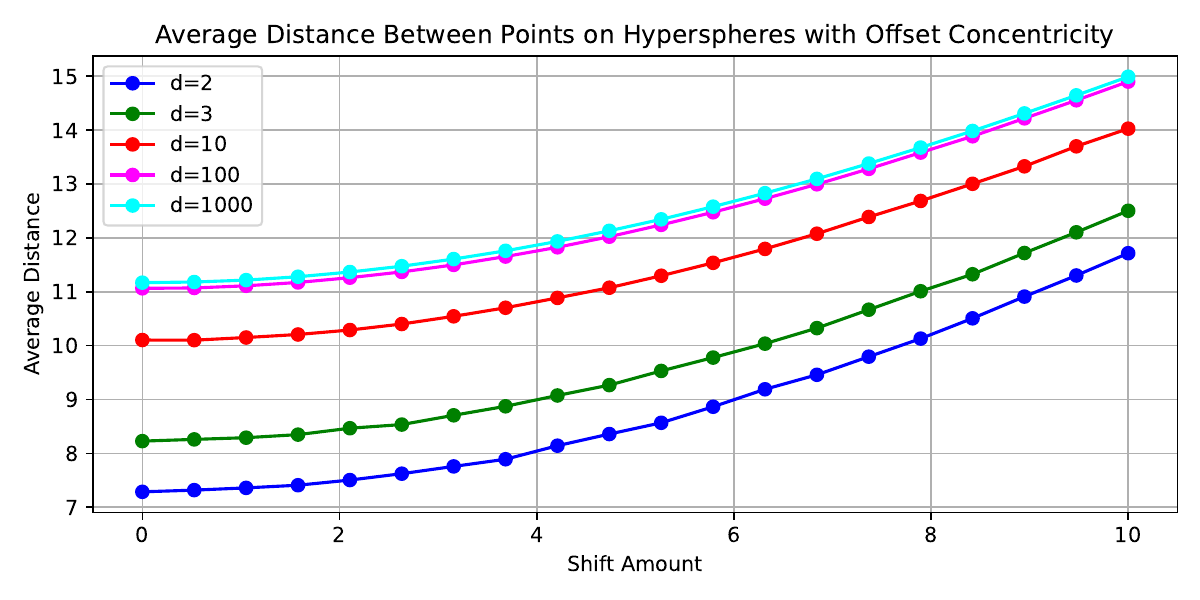}
    \caption{Expected distance of Eq. \ref{eq:distance_MC} between points on two closely aligned (or nearly concentric) hyperballs. Distance increases by shifting one of the hyperballs showing that optimality (i.e. less distance variation) is when hyperball are concentric.}
    \label{fig:pdisc-shift}
\end{figure}

\begin{figure*}[!ht]
\centering
   \subcaptionbox{Same class \label{fig:distance_sameclass}}{
        \includegraphics[height=4.7cm]{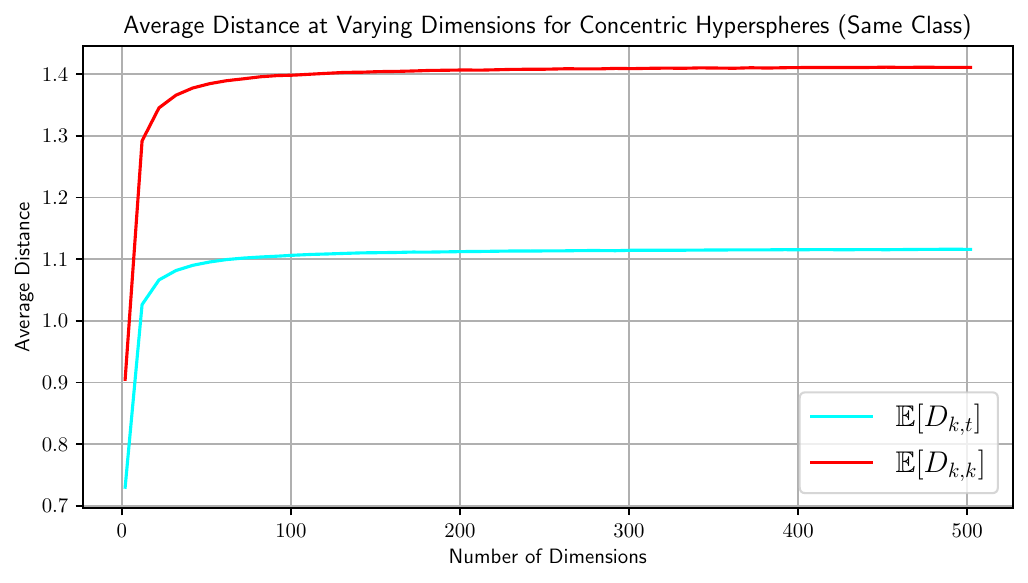}
    }
  \subcaptionbox{Different classes \label{fig:distance_diffclass}}{
        \includegraphics[height=4.7cm]{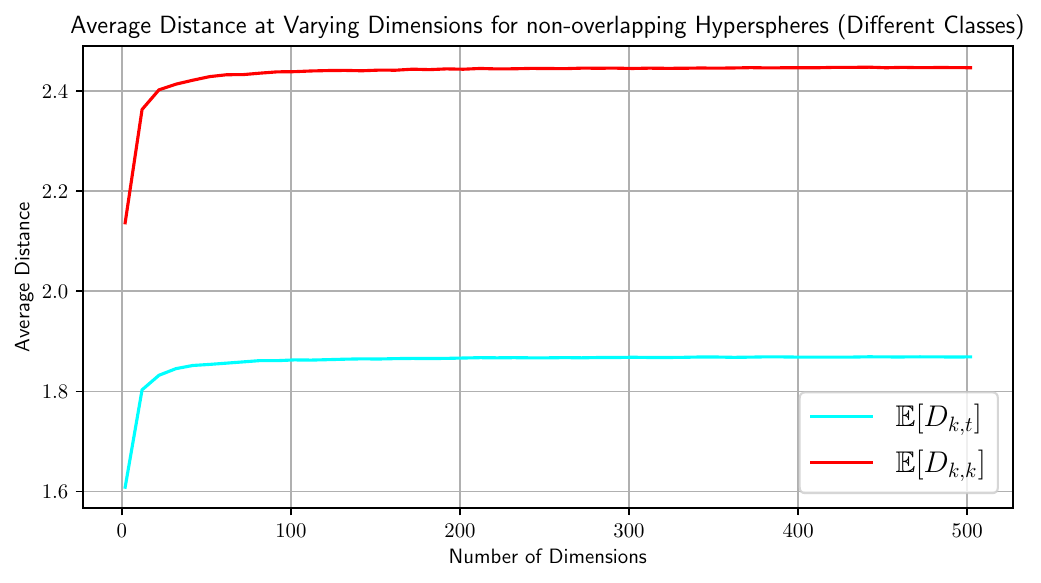}
    }
    \caption{
    Comparison of expected distances between feature points from two learning phases, characterized by indices \(k\) (before learning) and \(t\) (after learning), across different dimensions of the representation space. Both $\mathbb{E}[D_{k,t}]$ and $\mathbb{E}[D_{k,k}]$ are examined. \textit{(a)}: Expected distance in the case of same class, the value of $\mathbb{E}[D_{k,t}]$ remains less than $\mathbb{E}[D_{k,k}]$, satisfying on average the condition of Eq.~\ref{eq:first}. \textit{(b)}: In the case of two different classes, the expected distance, does not satisfy the condition of Eq.~\ref{eq:second}.
    }
    \label{fig:distanceHyperSphere}
\end{figure*}

In contrast, stationary features of models learned through a pre-allocated $d$-Simplex fixed classifier are concentric and do not suffer from class collapse due to pre-allocation. 
Using this result and the three previously established assumptions, the verification of optimality can be achieved. This is done by computing the expected distance according to Eq.~\ref{eq:distance_MC}, particularly within the hyperballs of two models corresponding to a single class.
Expected distance is computed according to Eq.~\ref{eq:distance_MC} by shifting one of the hyperballs and assuming a uniform distribution. Given the symmetry of a hyperball, shifting in any single direction is adequate for the evaluation.
Since no closed form solution of Eq.~\ref{eq:distance_MC} exists Monte Carlo integration is employed. Fig.~\ref{fig:pdisc-shift} illustrates optimality for a corresponding class in two stationary models. It shows that as the amount of shift increases, there is a corresponding increase in the expected distance, a phenomenon observed across various dimensional spaces.

The same evaluation is used to verify the definition of compatibility in Eq.~\ref{eq:first} and Eq.~\ref{eq:second}: 
\begin{equation}
\mathbb{E}[D_{k,t}] \leq \mathbb{E}[D_{k,k}]
\label{eq:expectation_first}
\end{equation}
(in the case of the same class) and if 
\begin{equation}
\mathbb{E}[D_{k,t}] \geq \mathbb{E}[D_{k,k}]
\label{eq:expectation_second}
\end{equation}
(in the case of different classes) hold.
In Fig.~\ref{fig:distance_sameclass} and Fig.~\ref{fig:distance_diffclass}, we show plots of $\mathbb{E}[D_{k,t}]$ and $\mathbb{E}[D_{k,k}]$ with varying feature dimension from $2$ to $500$. 
Without loss of generality, the hyperball radius starts at 
1 and is reduced to 0.5 (further radius reductions follow the same principle and are not shown).
The plots show that as the radius is reduced (i.e., more knowledge is assimilated) in the case of the same class the expected distance $\mathbb{E}[D_{k,t}]$ is always below $\mathbb{E}[D_{k,k}]$ at any feature dimensions (Fig.~\ref{fig:distance_sameclass}). 
Differently, as shown in Fig.~\ref{fig:distance_diffclass}, the expected distance evaluation for the case of different classes results in $\mathbb{E}[D_{k,t}] < \mathbb{E}[D_{k,k}]$ therefore not satisfying Eq.~\ref{eq:second}.
To satisfy Eq.~\ref{eq:second}, the hyperball $\mathcal{B}_t(\mathbf{w}_{y_i})$ from Eq.~\ref{eq:ball_t} should be placed away from the hyperball $\mathcal{B}_k(\mathbf{w}_{y_j})$ of the other class (Eq.~\ref{eq:ball_k}). 
Such repositioning changes the concentric arrangement of the hyperballs of the same class, which negatively affects the optimality.

The optimal approximation to compatibility directly follows from: (1) the fact that hyperballs centered at the vertices of a regular $d$-Simplex, are at their pairwise maximum distance, and (2) the addition of more classes does not alter this distance because their corresponding representation space is pre-allocated and remains unchanged (i.e., stationary). 

\end{proof}

In the proof above, it emerges that the satisfaction of both compatibility constraints of Def.~\ref{def:compatibility-shen} cannot be achieved. In the following corollary, we provide the explicit statement outside the proof above for a clearer and more focused exposition of this result, as it has a general validity beyond the specific assumption of a $d$-Simplex fixed classifier.

\begin{customcor}{1}[Infeasibility]
\label{corollary}
The two compatibility inequalities in Def.~\ref{def:compatibility-shen} cannot be satisfied by the representation learned by a trainable classifier. 
\end{customcor}
\begin{proof}
The proof follows immediately from the arguments presented in the final part of the proof of Theorem~\ref{theo:compatibility}. The discussion therein establishes that in order to satisfy Eq.~\ref{eq:second}, a shift of the hyperball 
$\mathcal{B}_t$ in Eq.~\ref{eq:ball_t} away from the hyperball $\mathcal{B}_k$ in Eq.~\ref{eq:ball_k} is required. 
This results in a departure from the concentric configuration for the case of the same class, thereby negatively affecting the optimality of Eq.~\ref{eq:first}. In the case in which the classifier can be trained, the introduction of additional classes alters the pairwise class distances, and as a result, a departure from the concentric configuration cannot be avoided. As a consequence the inequality constraints of compatibility cannot be satisfied.
\end{proof}

\section{Implementation Details}\label{sec:implementations_details}

In the following section, we provide more detailed information about the experimental settings described in Sec.~\ref{sec:experimental_results}.
We pre-train ResNet18 models on ImageNet32 for $300$ epochs. Pre-training was done using an SGD optimizer with a learning rate of $0.1$, momentum $0.9$, and weight decay $1\cdot10^{-4}$. Models are trained with a mini-batch size of $128$, and the learning rate follows a cosine annealing schedule throughout the training process.
For methods based on the $d$-Simplex fixed classifier \cite{pernici2021regular}, we pre-allocate  $K=1024$ classes (features vectors are then of size $d=1023$) and training is performed according to the cross-entropy loss of Eq.~\ref{eq:loss_ce_simplex}. 
The other methods utilize a trainable classifier, wherein the feature size corresponds to that of the ResNet18 architecture, namely $512$.

Models were fine-tuned on CIFAR100R for $70$ epochs.
Fine-tuning was performed using the SGD optimizer with learning rate of $0.001$, momentum $0.9$, weight decay $10^{-4}$ and with mini-batch size of $128$. The learning rate is decreased according to a linear scheduling with a reduction factor of $0.1$ at epochs $50$ and $65$. 

\section{Ablation Studies}\label{sec:ablation}

In this section, we present ablation studies of $d$-Simplex-HOC using CIFAR100R/10. These studies involved fine-tuning the model for 31 tasks, with two model replacements as is the experiment of Fig.~\ref{fig:cifar30-iamcl2r}.

\subsection{Hyperparamters}
The training of $d$-Simplex-HOC is influenced by the hyperparameters $\lambda$ and $\tau$, as used in Eq.~\ref{eq:total_loss} and Eq.~\ref{eq:CONTRAST2}, respectively. 
Tab.~\ref{tab:ablation_hyperaparams} shows the $AC$ metric for different values of $\lambda$ and $\tau$. The results show that using $\lambda = 0.1$ and $\tau = 10$ yields the highest performance in terms of $AC$. 
A lower value of $\lambda$ suggests a greater emphasis on the contrastive loss relative to the cross-entropy loss, prioritizing the higher-order component over the first-order one offered solely by the cross-entropy.
The value of $\tau$ yielding the highest $AC$ in our study closely aligns with that reported in \cite{chen2020simple}. This similarity suggests a consistent $\tau$ effect across various contexts of representation learning.

\begin{table}[h]
    \small
    \centering
    \setlength{\tabcolsep}{4pt}
    \sisetup{detect-weight=true,detect-all,table-format=-1.3,round-mode=places, round-precision=2}

    \resizebox{\hsize}{!}{
    \begin{tabular}{l|SSSSSSS}

    \arrayrulecolor{black}
    \toprule
    \diagbox{$\lambda$}{$\tau$} & \text{1}  & \text{5}  & \text{8} & \text{10 (${\scalebox{0.6}{$\spadesuit$}}$)} & \text{15} & \text{20} \\
    \midrule
    0.05 & 0.10 & 0.55 & 0.63 & 0.64 & 0.35 & 0.23\\ 
    0.1 (${\scalebox{0.6}{$\spadesuit$}}$) & 0.10 & 0.58 & 0.64 & \textbf{0.65} & 0.36 & 0.23\\ 
    0.25 & 0.06 & 0.30 & 0.43 & 0.42 & 0.34 & 0.21\\ 
    0.5 & 0.09 & 0.23 & 0.19 & 0.20 & 0.18 & 0.21\\ 
    0.75 & 0.17 & 0.19 & 0.16 & 0.13 & 0.12 & 0.10\\ 
    \bottomrule
    \end{tabular}
    }
\caption{Ablation study for $d$-Simplex-HOC in 31 tasks using CIFAR100R/10 with two model replacements of $\lambda$ (Eq.~\ref{eq:total_loss}) and $\tau$ (Eq.~\ref{eq:CONTRAST2}). The evaluation is performed with respect to the $AC$ metric. Values used in our implementation are marked with the ``(${\scalebox{0.8}{$\spadesuit$}}$)'' symbol. }
\label{tab:ablation_hyperaparams}
\end{table}

\subsection{Learning Rate}
Learning a new task without affecting the existing model's representation requires a proper selection of the learning rate.
Tab.~\ref{tab:lr} reports the metrics $AC$ and $AA_{31}$, obtained for different learning rate values $\eta$. A higher $\eta$ enables the model to adapt more quickly to new tasks; however, this results in a noticeable decline in performance with respect to both $AA_{31}$ and $AC$. This decline is primarily due to significant changes in the model's representation before and after the updates. In contrast, a lower learning rate allows the model to transition more gradually from its current state, leading to improved compatibility. This approach, while improving compatibility, results in a slight reduction in the model's ability to assimilate new knowledge from the task. Considering this trade-off, we opted for a learning rate of $0.001$ in our implementation.

\subsection{Training-sets Relative Size} 
We aim to study the impact on performance of the relative size between the dataset used for pre-training the models, namely ImageNet32, and the CIFAR100R dataset used for fine-tuning them.
To this end, we varied the number of images per class in the CIFAR100R dataset.
Tab.~\ref{tab:reduction_cifar} shows the values with 500 (all the images of CIFAR100 are used in CIFAR100R), 300, 200, 100, 50, 10, and 5 images per class.
We observe that compatibility performance ($AC$) decreases as the number of images per class reduces. Conversely, the average accuracy exhibits a gradual decline. This highlights that achieving compatibility is a complex constraint requiring adequate data.

\begin{table}   
\centering
    \begin{subtable}{.45\linewidth}
    \centering    
    \small
    \setlength{\tabcolsep}{4pt}
    \sisetup{detect-weight=true,detect-all,table-format=-1.3,round-mode=places}
    \begin{tabular}{l
                        S[round-precision=2]
                        S[round-precision=2]}
                        
            \arrayrulecolor{black}
            \toprule
            $\eta$ &  $AC$    &  $AA_{31}$ \\
            \midrule
            0.1  &  0.06667 & 58.205 \\
            0.01 & 0.4022 & 68.67  \\ 
            0.005 & 0.5742 & \hspace{-3pt}\textbf{68.94} \\
            0.001 (${\scalebox{0.6}{$\spadesuit$}}$) &  \textbf{0.65} & 67.40 \\
            0.0005 & 0.5677 & 66.31  \\
            0.0001 & 0.3204 & 63.437 \\
            0.00001 & 0.3011 & 63.322 \\
            \bottomrule
        \end{tabular}   
\caption{} \label{tab:lr}
\end{subtable}%
\hspace{10pt}
\begin{subtable}{.45\linewidth}
    \setlength{\tabcolsep}{5pt}
    \sisetup{detect-weight=true,detect-all,table-format=-1.3,round-mode=places}
    \centering
    \small
    \begin{tabular}{l
                    S[round-precision=2]
                    S[round-precision=2]}
                    
        \arrayrulecolor{black}
        \toprule
         \#imgs &  $AC$    &  $AA_{31}$\\
        \midrule
        500  & \textbf{0.69} & \hspace{-3pt}\textbf{67.97} \\
        300 (${\scalebox{0.6}{$\spadesuit$}}$) &  0.6495 & 67.403 \\
        200 & 0.55 & 66.95 \\
        100 & 0.42 & 65.83\\
        50 & 0.3161 & 65.007  \\
        10 & 0.2473 & 62.045  \\
        5 & 0.2215 & 61.769 \\
        \bottomrule
    \end{tabular}
\caption{} \label{tab:reduction_cifar}
\end{subtable}
\vspace{-10pt}
\caption{Ablation for $d$-Simplex-HOC in 31 tasks using CIFAR100R/10 with two model replacements of learning rate $\eta$ (\textit{a}) and of the number of images (\#imgs) per class in CIFAR100R (\textit{b}). Values used in our implementation are marked with the ``(${\scalebox{0.8}{$\spadesuit$}}$)'' symbol. }
    \label{tab:ablation}

\end{table}
\begin{figure}[t]
    \centering
    \includegraphics[width=0.8\linewidth]{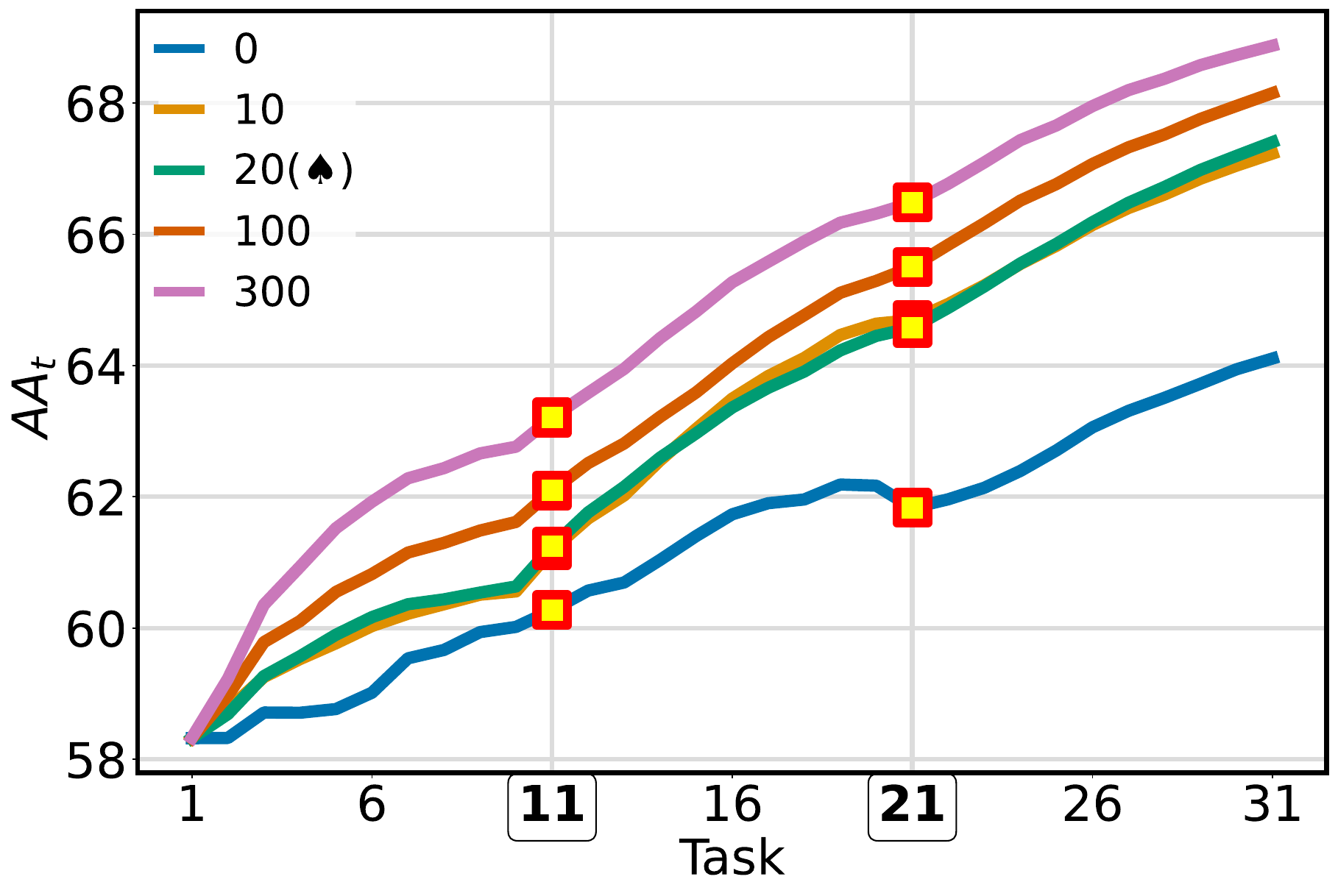}
    \caption{Ablation for $d$-Simplex-HOC in 31 tasks with CIFAR100R/10 with two model replacements of the number of images in the episodic memory (0 is \textit{rehearsal-free}). Values used in our implementation are marked with the ``{(${\scalebox{0.7}{$\spadesuit$}}$)}'' symbol.}\label{fig:abl-memory-size}
\end{figure}

\subsection{Episodic Memory Size}
Fine-tuning is performed using data from the new task along with an episodic memory to mitigate potential forgetting~\cite{icarl}. Consequently, we assess how the number of images per class in the episodic memory impacts the model's performance.
Fig.~\ref{fig:abl-memory-size} shows $AA_{t}$ curves for various numbers of images per class in the episodic memory. These plots illustrate scenarios ranging from the \textit{rehearsal-free} case, where no images are retained, to the case where all images of each class are stored (300 images per class), and include intermediate scenarios as well.
As expected, the more data are used in the memory, the more the accuracy increases. 
Remarkably, in the rehearsal-free case, there is a continuous improvement in accuracy. This case indicates that $d$-Simplex-HOC is capable of leveraging improvements from model replacement, even in the absence of episodic memory. This evidence may be relevant for future search/retrieval  systems which evolve or enhance their performance over time.

\section{\texorpdfstring{$d$}{d}-Simplex fixed classifier PyTorch Code}
\label{sec:simplex_formula}

We provide a GPU-based implementation to generate a $d$-Simplex classifier matrix $\mathbf{W}$ for a given number of pre-allocated classes $K$ that offers faster computation compared to CPU-based implementations~\cite{Pernici_2019_CVPR_Workshops,pernici2021regular,kasarla2022maximum}.

\begin{lstlisting}
def dsimplex_fixed_classifier(K):
    W = torch.zeros((K, K-1))
    W[:-1,:] = torch.eye(K-1)
    W = W.cuda()
    c = torch.sqrt(1 + torch.Tensor([K-1]).cuda())
    W[-1,:] = W[-1,:] + (1 - c) / (K-1)
    W.add_(-torch.mean(W, dim=0))
    W.div_(torch.linalg.norm(W) + 1e-8)
    W.requires_grad = False
    return W
\end{lstlisting} 
\fi

\end{document}